\newtheorem{theorem}{Theorem}
\title{Automatic Reward Design via Learning Motivation-Consistent Intrinsic Rewards}
\author{Yixiang	Wang\textsuperscript{\rm 1}, Yujing Hu\textsuperscript{\rm 2},
Feng Wu\textsuperscript{\rm 1}, Yingfeng Chen\textsuperscript{\rm 2}
}
\begin{document}

\maketitle

\begin{abstract}
  Reward design is a critical part of the application of reinforcement learning, the performance of which strongly depends on how well the reward signal frames the goal of the designer and how well the signal assesses progress in reaching that goal. In many cases, the extrinsic rewards provided by the environment (e.g., win or loss of a game) are very sparse and make it difficult to train agents directly. Researchers usually assist the learning of agents by adding some auxiliary rewards in practice. However, designing auxiliary rewards is often turned to a trial-and-error search for reward settings that produces acceptable results. In this paper, we propose to automatically generate goal-consistent intrinsic rewards for the agent to learn, by maximizing which the expected accumulative extrinsic rewards can be maximized. To this end, we introduce the concept of motivation which captures the underlying goal of maximizing certain rewards and propose the motivation based reward design method. The basic idea is to shape the intrinsic rewards by minimizing the distance between the intrinsic and extrinsic motivations. We conduct extensive experiments and show that our method performs better than the state-of-the-art methods in handling problems of delayed reward, exploration, and credit assignment.
\end{abstract}

\section{Introduction}

In recent years, Reinforcement Learning (RL) has shown great success on many complex single-agent domains \cite{schulman2017proximal}, two-player turn-based games \cite{silver2017mastering}, multi-agent systems \cite{jaderberg2019human}, etc. However, due to the imperfect reward model of increasingly complex environments, many RL applications are deeply involved in reward shaping, which is a process of assisting agents with extra knowledge.

In many applications, the shaping rewards are usually set manually by experts. Here, one challenge is to design appropriate reward signals so that as an agent learns, its behavior approaches and ideally eventually achieves what the application's designer actually desires \cite{sutton2018reinforcement}. Another challenge is to design a reward function that encourages the legitimate behaviors while still being learnable \cite{rlblogpost}. Furthermore, RL agents can discover unexpected ways to make their environments deliver rewards, some of which are cheating through loopholes of reward functions \cite{popov2017data}. Therefore, reward shaping is often turned to a trial-and-error search for rewards setting that produces acceptable results. Potential-Based Reward Shaping (PBRS) \cite{ng1999policy} ensures that the optimal policy of the original problem is not violated by the shaping reward function, which inspired a series of methods for transforming human knowledge into numerical reward functions \cite{knox2009interactively}.

In order to clarify the goal of reward shaping, a useful idea is to divide environmental rewards into extrinsic and intrinsic rewards, which is used in the optimal reward framework \cite{singh2010intrinsically}. Extrinsic rewards define the goal of the problem, while intrinsic rewards provide trainable signals to improve the learning dynamics of the agent and should also be optimized. The original optimal reward framework \cite{singh2010intrinsically} uses an exhaustive search method to find an optimal intrinsic reward function from the reward function space.
\citet{zheng2018learning} proposed a gradient-based method for learning neural network parameterized intrinsic rewards to avoid exhaustive search, but still have several limitations as later shown in our experiments.
\citet{jaderberg2019human} showed that human-designed game points\footnote{e.g., ``I tagged opponent with the flag'' in capture the flag game.} can be used to reduce the search space of shaping reward functions in complex environments. However, the high computational cost of the evolutionary strategy that they used for reward function optimization makes the proposed method inefficient.

In this paper, we propose the \textit{Motivation-Based Reward Design} (MBRD) method which also utilizes prior knowledge such as game points to reduce search space and shares the high computational efficiency of the gradient-based method. The basic idea of our method is to automatically generate goal-consistent intrinsic rewards for the agent, by maximizing which the expected cumulative extrinsic rewards are maximized.
Specifically, we introduce the concept of motivation to capture the underlying goal of maximizing certain rewards and shape intrinsic rewards by minimizing the distance between extrinsic and intrinsic motivations.

Our main contributions are summarized as follow:
\begin{itemize}
  \item Firstly, we propose to combine the game points-based and gradient-based methods to generate intrinsic rewards for the agent, which on one hand can utilize prior knowledge to accelerate learning, and on the other hand avoids high computational cost.
  \item Secondly, instead of relying on the complicated meta-policy gradient method, we propose the novel MBRD method to learn intrinsic rewards based on the principle that intrinsic rewards must provide consistent motivation with the extrinsic rewards.
  \item Lastly, we conduct extensive experiments in the five benchmark domains, where MBRD learns faster and performs better over other baselines, and shows the advantages in handling problems of delayed reward, exploration, and credit assignment.
\end{itemize}

All together, we propose a novel automatic reward design approach via learning motivation-consistent intrinsic rewards, which are useful for many RL applications.



\section{The MBRD Method}

In this section, we propose our MBRD algorithm, which is built on top of the optimal reward framework and can also utilize prior knowledge to reduce the search space. As in the optimal reward framework, the goal of MBRD is to maximize the fitness of agent in the environment by training the agent with parameterized intrinsic rewards.

\subsection{Problem Statement}


Assuming that the {\em Markov Decision Process} (MDP) of the underlying problem is $\mathcal{M} = \langle \mathcal{S},\mathcal{A},T,\gamma,R \rangle$, where: $\mathcal{S}$ is the state space, $\mathcal{A}$ is the action space, $T$ is the transition function, $\gamma$ is the discount factor, and $R$ is the reward function. Our goal is to introduce a new MDP $\mathcal{M}' = \langle \mathcal{S},\mathcal{A},T,\gamma,R' \rangle$ by modifying the reward function $R$ and guarantee the optimal policy learned in $\mathcal{M}'$ is still the optimal policy in $\mathcal{M}$. This is also known as policy invariance in PBRS. In other words, the purpose of policy invariance is to make the optimal policy $\pi_{M^{\prime}}^{*}$ in $\mathcal{M}'$ that gets the maximum cumulative reward $E_{a \sim \pi_{M^{\prime}}^{*}}\left[\sum_{t=0}^{\infty} \gamma^{t} R^{in}(s^{t}, a^{t})\right]$, can also get the maximum counterpart $E_{a \sim \pi_{M^{\prime}}^{*}}\left[\sum_{t=0}^{\infty} \gamma^{t} R^{ex}(s^{t}, a^{t})\right]$ in $\mathcal{M}$.

Let $R^{ex}$ denote the reward given by the environment (i.e., the extrinsic reward), and the fitness of agent is the cumulative sum of $R^{ex}$. In the environment we collect $n$-dimensional countable features $\pmb{\rho}:S\times A \rightarrow \mathbb{R}^n$, which represents some special events\footnote{e.g., game points \cite{jaderberg2019human}. }. We do not make any assumption with $\pmb{\rho}$ except they must be countable features. We build our intrinsic reward on the given features $R^{in}_{\phi}=\pmb{w}_{\phi} \cdot \pmb{\rho}$, where $\pmb{w}_{\phi}$ are the weights of the features and $\phi$ are parameters of $\pmb{w}_{\phi}$. Now, we aims to training agent's policy by parameterized intrinsic rewards which are optimized by gradient-based method.

Specifically, the task can be transformed to a two-tier optimization problem as follows.
\begin{itemize}
  \item The inner optimization problem for $\pi_{\theta}$ given $\phi$:
  \begin{equation}
    \pi_{\theta} = \arg\max_{\pi_{\theta}'} J_{\text {inner }}(\pi_{\theta}', \phi), \quad \text{where} \label{eq:inner_update}
  \end{equation}
  \begin{equation}
    J_{\text{inner}}(\pi_{\theta}, \phi) = E_{a \sim \pi_{\theta}}\left[\sum_{t=0}^{\infty} \gamma^{t} R^{in}(s, a | \pmb{\rho}^t, \phi) \right]
  \end{equation}
  \begin{equation}
    R^{in}(s^t, a^t | \pmb{\rho}^t, \phi) = \pmb{w}_{\phi} \cdot \pmb{\rho}(s^t, a^t) \label{eq:intrinsic_reward}
  \end{equation}
  \item The outer optimization problem for $\phi$ given $\pi_{\theta}$:
  \begin{equation}
    \phi = \arg\max_{\phi'} J_{\text{outer}}(\phi', \pi_{\theta}), \quad \text{where} \label{eq:outer_update}
  \end{equation}
  \begin{equation}
    J_{\text{outer}}\left(\phi, \pi_{\theta}\right) = E_{a \sim \pi_{\theta}}^{\phi}\left[\sum_{t=0}^{\infty} \gamma^{t} R^{ex}(s^t, a^t)\right]
  \end{equation}
\end{itemize}



Here, the inner optimization maximizes $J_{\text{inner}}$, i.e., the agent's expected discounted accumulative intrinsic rewards. The outer optimization of $J_{\text{outer}}$ can be viewed as a meta-task, in which the meta-reward (extrinsic reward of the environment) is maximized with respect to intrinsic reward schemes, with the inner optimization providing the meta transition dynamics \cite{jaderberg2019human}.


\subsection{Algorithm Overview}

The main procedure of our MBRD method is outline in Algorithm \ref{alg:mbrd}. In each training episode, we start with collecting state transitions with the extrinsic reward $r^{ex}$ and features $\rho$ and store them in the replay buffers (Lines 6-11). After that, we update the policy $\pi_{\theta}$ and weights of the intrinsic reward $w_{\phi}$ in batches given the transitions from the replay buffers (Lines 12-17). We iteratively repeat the processes above until it is timeout and return the final policy.

\begin{algorithm}[t]
\caption{Motivation-Based Reward Design (MBRD)}
\label{alg}
\begin{algorithmic}[1]
\STATE Initialize intrinsic reward parameters $w_{\phi}$
\STATE Initialize buffer $\mathcal{D}$ for policy update
\STATE Initialize buffer $\mathcal{D'}$ for $w_{\phi}$ update
\REPEAT
\STATE Initialize environment at the start of an episode
\REPEAT
\STATE $a \sim \pi_{\theta}\left(a | s\right)$
\STATE Execute $a$ in the environment
\STATE Store $(s,a,s',\rho)$ in replay buffer $\mathcal{D}$
\STATE Store $(s,a,r^{ex},s',\rho)$ in replay buffer $\mathcal{D'}$
\UNTIL{episode end.}
\IF{time to update}
\STATE Collect batch $\mathcal{T}$ from $\mathcal{D}$, and compute intrinsic reward $R^{in}$ using $w_{\phi}$ by Equation \ref{eq:intrinsic_reward}
\STATE Update policy $\pi_{\theta}$ using $\mathcal{T}$ by Equation \ref{eq:inner_update}
\STATE Collect batch $\mathcal{T'}$ from $\mathcal{D'}$, and compute Monte-Carlo returns using extrinsic and intrinsic reward respectively for all $(s,a) \sim \mathcal{T'}$.
\STATE Update intrinsic reward parameters $w_{\phi}$ with $\mathcal{T'}$ by Equation \ref{intinsic_reward_update}
\ENDIF
\UNTIL{timeout.}
\RETURN policy $\pi_{\theta}$
\end{algorithmic} \label{alg:mbrd}
\end{algorithm}

In more details,  the inner optimization problem is solved by computing the intrinsic rewards $R^{in}$ by Equation \ref{eq:intrinsic_reward} and then updating the policy $\pi_{\theta}$ with the objective function in Equation \ref{eq:inner_update} by gradient-based method (Lines 13-14). This is straightforward because the parameterized intrinsic rewards are denser and more informative given the features. Indeed, this can be done using some standard RL techniques.

The main challenge here is how to solve the outer optimization problem, in which the best weights for the intrinsic rewards $w_{\phi}$ is updated (Lines 15-16). In Equation \ref{eq:outer_update}, we want to compute the parameters $\phi$ of the weights by maximizing the expected values of the extrinsic rewards $R^{ex}$ (i.e., the real rewards of the environment). However, it is challenging to do this because we may need to search the huge parameter space of $\phi$ by: firstly computing a policy $\pi_{\theta}$ for each $\phi$ in the inner optimization, then estimating $J_\text{outer}$ with $\pi_{\theta}$, and finally selecting the best $\phi$ with the maximum $J_\text{outer}$.

Against this background, we propose our novel approach to the outer optimization problem. The basic idea is to make a connect between the extrinsic reward $R^{ex}$ and the intrinsic reward $R^{in}$ under the concept of motivation. In the following section, we will describe what motivation means and how to map rewards to motivations.

\subsection{Mapping Rewards to Motivation}

\begin{figure}[t]
\centering
\includegraphics[width=.6\linewidth]{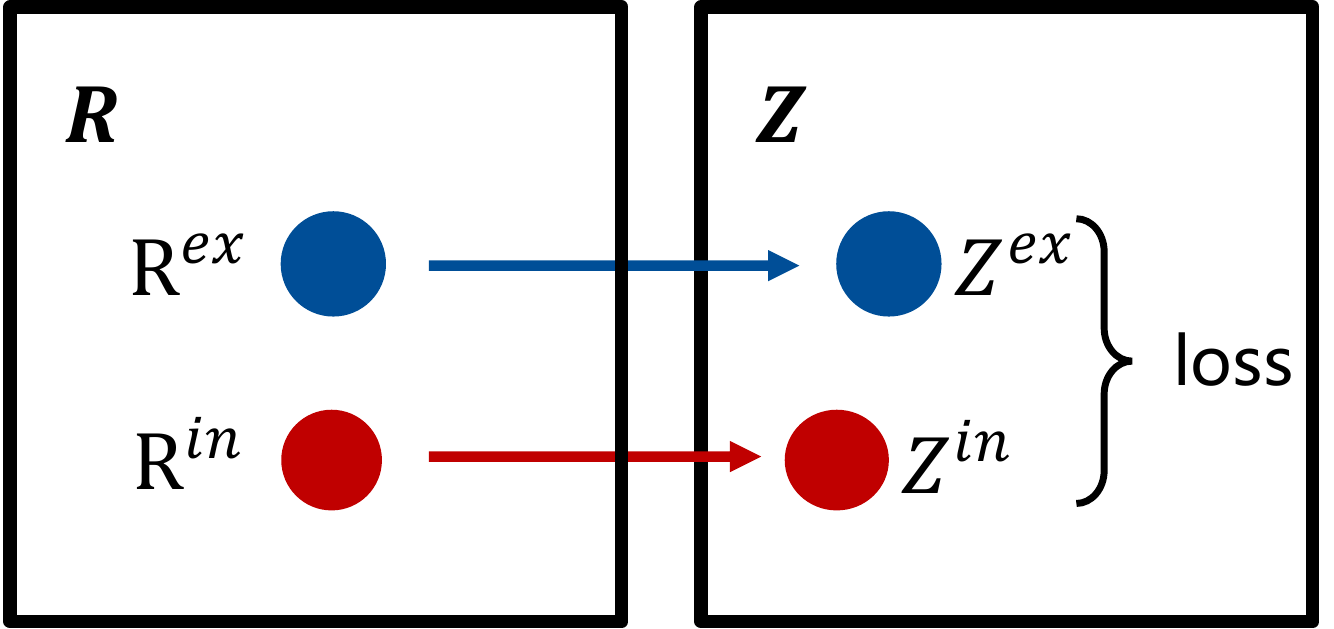}
\caption{Illustration of the basic idea of the MBRD method.}
\label{motivation}
\end{figure}

We introduce a latent variable $z$ to model what behaviors are encouraged by a series of reward signals. This is called {\em motivation} because it motivates the agent towards certain behaviors. Notice that motivation is different from sub-goal because it does not necessarily indicate an achievable results.

We denote the space of the latent variable $z$ by $\mathcal{Z}$ and let $z_{\text{ex}}$ and $z_{\text{in}}$ denote the motivations of improving the expected accumulative extrinsic and intrinsic rewards, respectively. As shown in Figure \ref{motivation}, the basic idea of our method is to minimize the motivation distance between the intrinsic and extrinsic reward functions (i.e., the distance between $z_{\text{in}}$ and $z_{\text{ex}}$). Now the key problem is to find appropriate features which can quantify such two motivations.

For a given reward function, the gradient of the expected accumulative rewards with respect to the policy parameter $\theta$ (i.e., policy gradient) matches well the meaning of motivation. Suppose the policy parameter $\theta$ is directly optimized according to the intrinsic and extrinsic rewards, we can define two virtual learning objectives as:
\begin{equation}
\begin{aligned}
\operatorname{J_{virtual}^{ex}}(\theta) &= E_{a \sim \pi_{\theta}}^{\phi}\left[\sum_{t=0}^{\infty} \gamma^{t} R^{ex}(s^t, a^t)\right] \\
\operatorname{J_{virtual}^{in}}(\theta) &= E_{a \sim \pi_{\theta}}\left[\sum_{t=0}^{\infty} \gamma^{t} R^{in}(s^t, a^t | \pmb{\rho}^t, \phi)\right]. \label{vex-vin}
\end{aligned}
\end{equation}
Here, we assume that there are two virtual agents, where one is motivated by the extrinsic rewards $R^{ex}$ and the other is motivated by the intrinsic rewards $R^{in}$. Intuitively, given some random $\phi$, the virtual agent with $R^{ex}$ and the virtual agent with $R^{in}$ will behavior differently. In other words, we say that they have different motivations.

According to the policy gradient theorem, we have:
\begin{equation}
\begin{aligned}
\nabla_{\theta} \operatorname{J_{virtual}^{ex}}(\theta) &= E_{\pi_{\theta}}\left[ \nabla_{\theta} \log \pi_{\theta}(s, a) G^{ex}\right] \\[5pt]
\nabla_{\theta} \operatorname{J_{virtual}^{in}}(\theta) &= E_{\pi_{\theta}}\left[ \nabla_{\theta} \log \pi_{\theta}(s, a) G^{in}_{\phi}\right] 
\end{aligned}
\end{equation}
where $G^{ex}$ and $G^{in}$ are Monte-Carlo returns of the intrinsic and extrinsic rewards respectively.

Now, we use the gradient in virtual update process to represent extrinsic and intrinsic motivations:
\begin{equation}
\pmb{z}_{ex} := \nabla_{\theta} \operatorname{J_{virtual}^{ex}}(\theta) ~,\quad
\pmb{z}_{in} := \nabla_{\theta} \operatorname{J_{virtual}^{in}}(\theta).
\end{equation}
Notice that the motivations of the two virtual agents are identical if they have the same policy. On the other hand, they will result in the same policy if their motivations are identical. Therefore, the policy invariance can be achieved by minimizing the distance of the two motivations. Next, we describe how to measure the distance of two motivations.

\subsection{Distance Measure of Motivations}

The next step is to measure the distance between these motivations. This is a new problem since the scales of the extrinsic rewards and the intrinsic rewards may be different. Hence it is inappropriate to require the geometric distance of the two motivations to be as close as possible. Instead, we use the angle difference of motivations to measure the distance between motivations. Intuitively, this measures the tendency of the motivations.

Assume $\zeta$ is the angle of $\pmb{z}_{ex}$ and $\pmb{z}_{in}$. In order to shape intrinsic rewards whose $\pmb{z}_{in}$ should be similar with $\pmb{z}_{ex}$, a straightforward method is to minimize the angle $\zeta$, which is equivalent to maximizing $\cos{\zeta}$. However, directly maximizing $\cos \zeta$ is difficult. Therefore, we choose to maximize an approximation of $\cos{\zeta}$ as below:
\begin{equation}
\begin{aligned}
\mathop{\arg\min}_{\phi} \zeta &= \mathop{\arg\max}_{\phi} \cos{\zeta} \\
&= \mathop{\arg\max}_{\phi} \frac{\pmb{z}_{ex}\cdot\pmb{z}_{in}}{||\pmb{z}_{ex}||\ ||\pmb{z}_{in}||} \\
&= \mathop{\arg\max}_{\phi} \frac{\pmb{z}_{ex}\cdot\pmb{z}_{in}}{||\pmb{z}_{in}||}.\label{e6}
\end{aligned}
\end{equation}

In practice, we use $(\pmb{z}_{ex}\cdot\pmb{z}_{in} - \beta||\pmb{z}_{in}||)$ to approximate Equation \ref{e6} because it is easier to compute and we have
\begin{equation}
  \pmb{z}_{ex}\cdot\pmb{z}_{in} - \beta||\pmb{z}_{in}|| \propto \frac{\pmb{z}_{ex}\cdot\pmb{z}_{in}}{||\pmb{z}_{in}||}
  \propto \cos{\zeta} \propto \zeta
\end{equation}
where $\beta$ is the tuning parameter for the regularization term.


\subsection{Optimizing Outer Objective}

Instead of optimizing the final objective of the meta-task $J_{\text{outer}}(\phi, \pi_{\theta})$ by computing the meta-policy gradient $\nabla_{\phi} J_{\text{outer}}(\phi, \pi_{\theta})$, which is challenging, we propose a much simpler method to optimize $\phi$ based on the above equations for measuring the motivation distance. The basic idea is to optimize $\phi$ in the direction of minimizing the distance between the intrinsic and extrinsic motivations.

Note that the policy $\pi_{\theta}$ is optimized according to the intrinsic rewards. When the intrinsic and extrinsic motivations are close to each other, optimizing the policy will lead to the optimization of the expected accumulative extrinsic rewards. Therefore, we define a new objective for optimizing $\phi$, which is simpler than optimizing $J_{\text{outer}}$ directly as follow:
\begin{equation}
\begin{aligned}
J_{\text{o}}(\phi) & = \pmb{z}_{ex}\cdot\pmb{z}_{in} - \beta||\pmb{z}_{in}|| \\
& = \nabla_{\theta} \operatorname{J_{virtual}^{ex}} \cdot
\nabla_{\theta} \operatorname{J_{virtual}^{in}} - \beta||\pmb{z}_{in}||
\label{jo}
\end{aligned}
\end{equation}
where $\nabla_{\theta} \operatorname{J_{virtual}^{ex}}$ and $\nabla_{\theta} \operatorname{J_{virtual}^{in}}$ are defined in Equation \ref{vex-vin}.

The purpose of maximizing $J_{\text{o}}$ is to minimize the distance between extrinsic and intrinsic motivations. Hence, we update $J_{\text{o}}(\phi)$ by one step of gradient ascent as below:
\begin{equation}\label{intinsic_reward_update}
 \nabla_{\phi} J_{\text{o}} = \nabla_{\theta} \operatorname{J_{virtual}^{ex}} \cdot
\nabla_{\phi}\nabla_{\theta} \operatorname{J_{virtual}^{in}} - \beta\nabla_{\phi}||\pmb{z}_{in}||,
\end{equation}
where we have:
\begin{equation}
\begin{aligned}
\nabla_{\phi}\nabla_{\theta} \operatorname{J_{virtual}^{in}}
 = & \sum_{(s, a) \sim \mathcal{T'}} \nabla_{\theta} \log \pi_{\theta}(s, a) \nabla_{\phi} G^{in}_{\phi} \\
 = & \sum_{(s, a) \sim \mathcal{T'}} \nabla_{\theta} \log \pi_{\theta}(s, a) \\
 & \qquad \cdot \sum_{i=t}^{\infty} \gamma^{i-t} \nabla_{\phi} r_{\phi}^{in}\left(s_{i}, a_{i}\right) \\
 = & \sum_{(s, a) \sim \mathcal{T'}} \nabla_{\theta} \log \pi_{\theta}(s, a) \\
 & \qquad \cdot \sum_{i=t}^{\infty} \gamma^{i-t} \nabla_{\phi} \pmb{w}_{\phi} \cdot \pmb{\rho}\left(s_{i}, a_{i}\right) 
\end{aligned}
\end{equation}

As shown in Algorithm \ref{alg:mbrd}, we use Equation \ref{intinsic_reward_update} to update the intrinsic reward parameters $\phi$. Note that the policy update (Line 14) and intrinsic reward parameter update (Line 16) processes are independent of each other in our method.

\begin{theorem}
  The outer optimization problem of Equation \ref{eq:outer_update} is solved by maximizing $J_{\text{o}}(\phi)$ defined in Equation \ref{jo}.
\end{theorem}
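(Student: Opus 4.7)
The plan is to establish the theorem through a chain of implications that link the alignment of the two motivation vectors to ascent on the extrinsic objective. The crucial observation I would exploit is that $J_{\text{virtual}}^{ex}(\theta)$ in Equation \ref{vex-vin} is identical in form to $J_{\text{outer}}(\phi,\pi_{\theta})$ in Equation \ref{eq:outer_update}, so $\pmb{z}_{ex}=\nabla_{\theta}J_{\text{virtual}}^{ex}$ is precisely the direction that increases the outer objective. Meanwhile, Algorithm \ref{alg:mbrd} updates the policy by ascending along $\pmb{z}_{in}=\nabla_{\theta}J_{\text{virtual}}^{in}$. Consequently, if $\pmb{z}_{in}$ can be aligned with $\pmb{z}_{ex}$ by a suitable choice of $\phi$, each policy step driven by the intrinsic reward simultaneously ascends $J_{\text{outer}}$, which is exactly the content of the theorem.

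The steps I would carry out are: (i) Reuse the derivation in Equation \ref{e6}, noting that $\|\pmb{z}_{ex}\|$ is independent of $\phi$, to conclude that $\arg\max_{\phi}\cos\zeta = \arg\max_{\phi}(\pmb{z}_{ex}\cdot\pmb{z}_{in})/\|\pmb{z}_{in}\|$. (ii) Justify the proportionality $\pmb{z}_{ex}\cdot\pmb{z}_{in}-\beta\|\pmb{z}_{in}\|\propto (\pmb{z}_{ex}\cdot\pmb{z}_{in})/\|\pmb{z}_{in}\|$ claimed right after Equation \ref{jo} by viewing the regularizer $-\beta\|\pmb{z}_{in}\|$ as a Lagrangian term that constrains the norm of $\pmb{z}_{in}$, so that both objectives share the same maximizing directions and hence the same optimal $\phi$ up to scale. (iii) Argue that at a maximizer of $J_{o}$, the angle $\zeta$ is minimized, so $\pmb{z}_{in}=\alpha\,\pmb{z}_{ex}$ for some $\alpha>0$; therefore a policy step $\theta \leftarrow \theta + \eta\pmb{z}_{in}$ is collinear with an ascent step on $J_{\text{virtual}}^{ex}=J_{\text{outer}}$, so iterating the inner and outer updates drives $J_{\text{outer}}$ to a stationary point, which is what it means for Equation \ref{eq:outer_update} to be solved.

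The main obstacle is the gap between the local, gradient-direction statement and the global optimization claim in the theorem. The argument above only shows that every gradient step on $\phi$ via $J_{o}$ keeps the induced policy-gradient direction aligned with $\nabla_{\theta}J_{\text{outer}}$; turning this into "maximizing $J_{o}$ solves the outer problem" requires either an additional assumption that alignment persists along the optimization trajectory or that we interpret ``solved'' in a gradient-flow sense. A secondary but related difficulty is making the proportionality in step (ii) rigorous when $\|\pmb{z}_{in}\|$ is not held constant during $\phi$ updates; I would address this by treating $\beta$ as a tunable multiplier and invoking a first-order optimality condition on $\phi$, showing that the stationary points of the two objectives coincide so that the $\beta$-regularized surrogate is a legitimate proxy for minimizing $\zeta$.
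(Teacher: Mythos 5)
Your argument takes a genuinely different route from the paper's. The paper's proof is a chain-rule (meta-gradient) computation: it considers one policy update $\theta' = \theta + \alpha\nabla_{\theta}J_{\text{virtual}}^{in}$, differentiates the post-update extrinsic objective through that update to get $\nabla_{\phi}J_{\text{virtual}}^{ex} = \alpha\,\nabla_{\theta'}J_{\text{virtual}}^{ex}\cdot\nabla_{\phi}\nabla_{\theta}J_{\text{virtual}}^{in}$ (Equation \ref{proof}), and observes that this is exactly proportional to the first term of $\nabla_{\phi}J_{\text{o}}$ in Equation \ref{intinsic_reward_update}, with the $\beta\|\pmb{z}_{in}\|$ term dismissed as a regularizer. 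In other words, the paper argues directly about the $\phi$-gradient: ascent on $J_{\text{o}}$ is (to first order, identifying $\nabla_{\theta'}J_{\text{virtual}}^{ex}$ with $\pmb{z}_{ex}$ evaluated at $\theta$) ascent on $J_{\text{outer}}$. You instead argue geometrically about the $\theta$-update direction: maximizing $J_{\text{o}}$ aligns $\pmb{z}_{in}$ with $\pmb{z}_{ex}$, so the inner policy step becomes an ascent direction for $J_{\text{outer}}$. Your version is more faithful to the paper's ``motivation distance'' narrative and makes the role of the cosine surrogate explicit (which the paper's proof never actually uses), but it establishes a weaker conclusion --- that the induced policy update does not decrease $J_{\text{outer}}$ --- rather than the claim that the $\phi$-update itself follows $\nabla_{\phi}J_{\text{outer}}$. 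The paper's version buys a direct statement about the outer problem at the cost of an unacknowledged first-order approximation and the same hand-waving about the regularization term that you at least attempt to justify via a Lagrangian view.

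One concrete overreach in your step (iii): at a maximizer of $J_{\text{o}}$ over the restricted class $R^{in}_{\phi}=\pmb{w}_{\phi}\cdot\pmb{\rho}$, you cannot conclude $\pmb{z}_{in}=\alpha\,\pmb{z}_{ex}$ with $\alpha>0$; the feature map $\pmb{\rho}$ may not span enough of the gradient space to drive $\zeta$ to zero, so the most you get is the minimal achievable angle, and the collinearity (hence the ``each step ascends $J_{\text{outer}}$'' conclusion) only holds when that minimal angle satisfies $\cos\zeta>0$. You would need to either assume the features are expressive enough or weaken the conclusion to ``ascent whenever the optimal alignment is positive.'' The local-versus-global gap you flag at the end is real, but it afflicts the paper's own proof equally, so it is not a defect specific to your approach.
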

\begin{proof}
In Equation \ref{eq:outer_update}, the goal is to find the parameters $\phi$ that maximizing $J_\text{outer}$. Note that $\operatorname{J_{virtual}^{ex}}$ is equivalent to $J_\text{outer}$. In what follows, we will show that the gradient $\nabla_{\phi} J_{\text{o}}$ is directly proportional to the gradient $\nabla_{\phi} \operatorname{J_{virtual}^{ex}}$ and thereby the gradient $\nabla_{\phi} J_\text{outer}$ as below:
\begin{equation}
  \nabla_{\phi} J_{\text{o}} \propto \nabla_{\phi} \operatorname{J_{virtual}^{ex}} = \nabla_{\phi} J_\text{outer}
\end{equation}

If we apply once policy update with the intrinsic rewards, which updates policy parameters $\theta$ to $\theta'$:
\begin{equation}
\begin{aligned}
\theta' &= \theta + \alpha \nabla_{\theta} \operatorname{J_{virtual}^{in}} \\
&= \theta + \alpha \sum_{(s, a) \sim \mathcal{T'}} \nabla_{\theta} \log \pi_{\theta}(s, a) G^{in}_{\phi}.
\end{aligned}
\end{equation}
Then we use extrinsic reward to maximize the expected cumulative extrinsic reward:
\begin{equation}
\begin{aligned}
\nabla_{\phi} \operatorname{J_{virtual}^{ex}}
&= \nabla_{\theta'} \operatorname{J_{virtual}^{ex}} \nabla_{\phi} \theta' \\
&= \nabla_{\theta'} \operatorname{J_{virtual}^{ex}} \nabla_{\phi}(\theta + \alpha \nabla_{\theta} \operatorname{J_{virtual}^{in}}) \\
&= \alpha \nabla_{\theta'} \operatorname{J_{virtual}^{ex}} \nabla_{\phi} \nabla_{\theta} \operatorname{J_{virtual}^{in}}.
\label{proof}
\end{aligned}
\end{equation}
Note that the result of Equation \ref{proof} is proportional to the first term of Equation \ref{intinsic_reward_update} and the second term is a regularization of $\phi$. This concludes that the parameters $\phi$ of maximizing $J_{\text{o}}$ is also maximizing $J_\text{outer}$ and thus the theorem is proved.


\end{proof}

\begin{figure}[t]
    \centering
    \subfigure[Foraging]{
    \begin{minipage}[t]{0.33\linewidth}
    \centering
    \includegraphics[width=\linewidth]{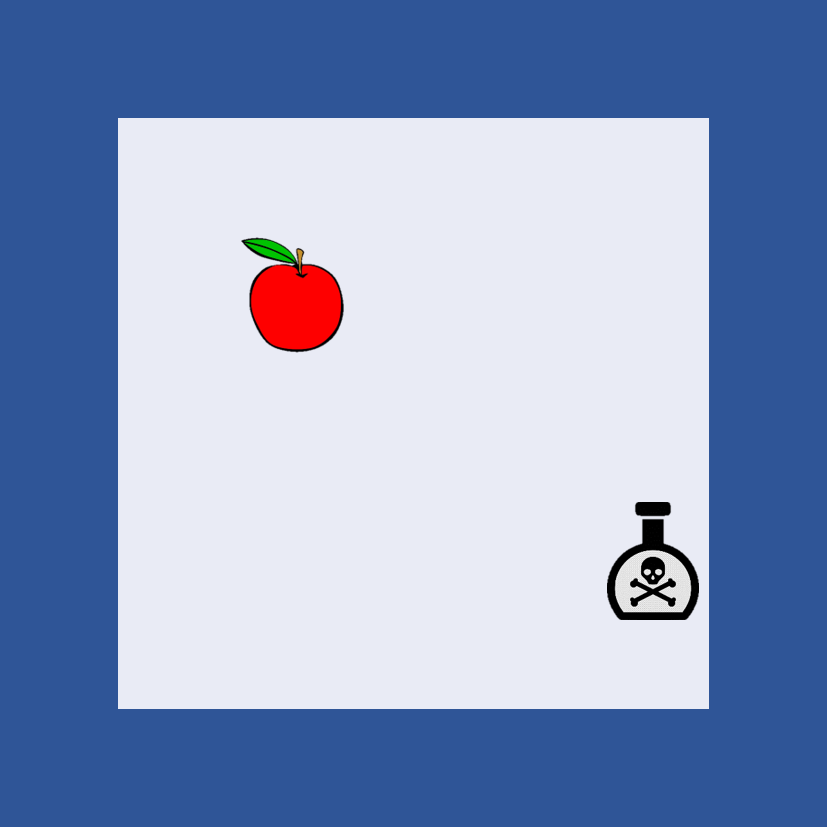}
    \end{minipage}%
    }%
    \subfigure[Hungry-Thirsty]{
    \begin{minipage}[t]{0.33\linewidth}
    \centering
    \includegraphics[width=\linewidth]{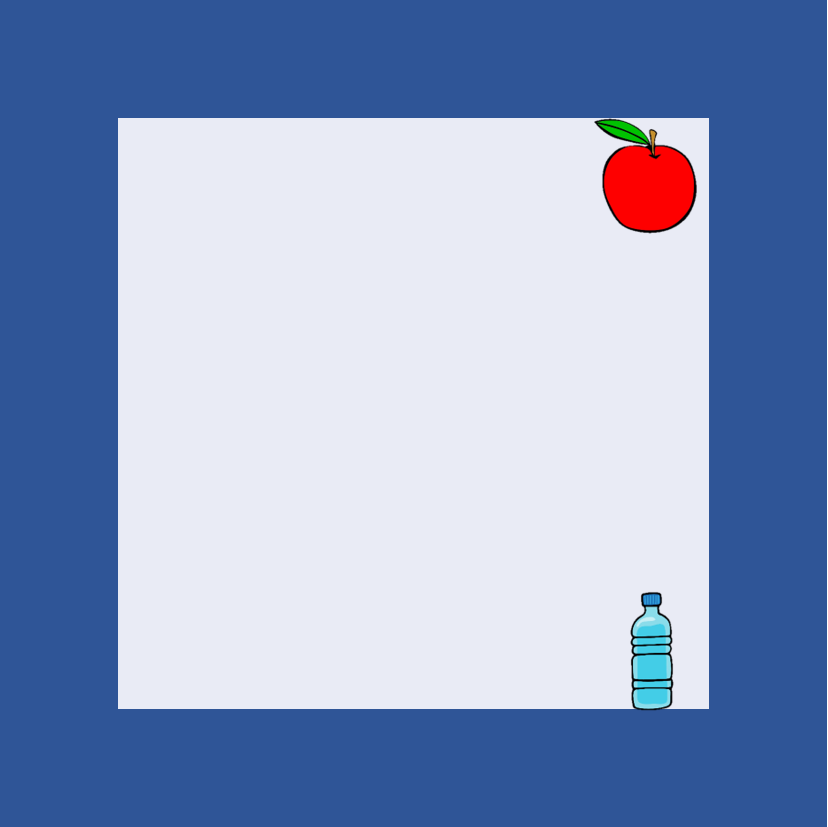}
    \end{minipage}%
    }%
    \subfigure[Fight monster]{
    \begin{minipage}[t]{0.33\linewidth}
    \centering
    \includegraphics[width=\linewidth]{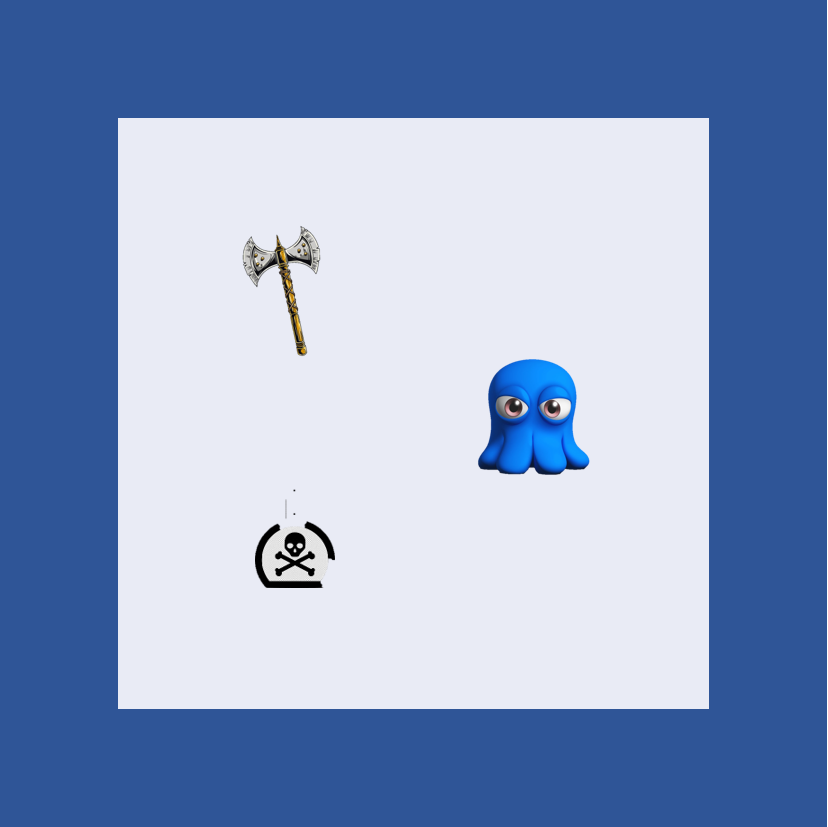}
    \end{minipage}
    }
    \centering
    \caption{Illustration of the grid-world domains.}
    \label{domain}
\end{figure}

\begin{figure*}[t]
    \centering
    \subfigure[Foraging]{
    \begin{minipage}[t]{0.32\linewidth}
    \centering
    \includegraphics[width=\columnwidth]{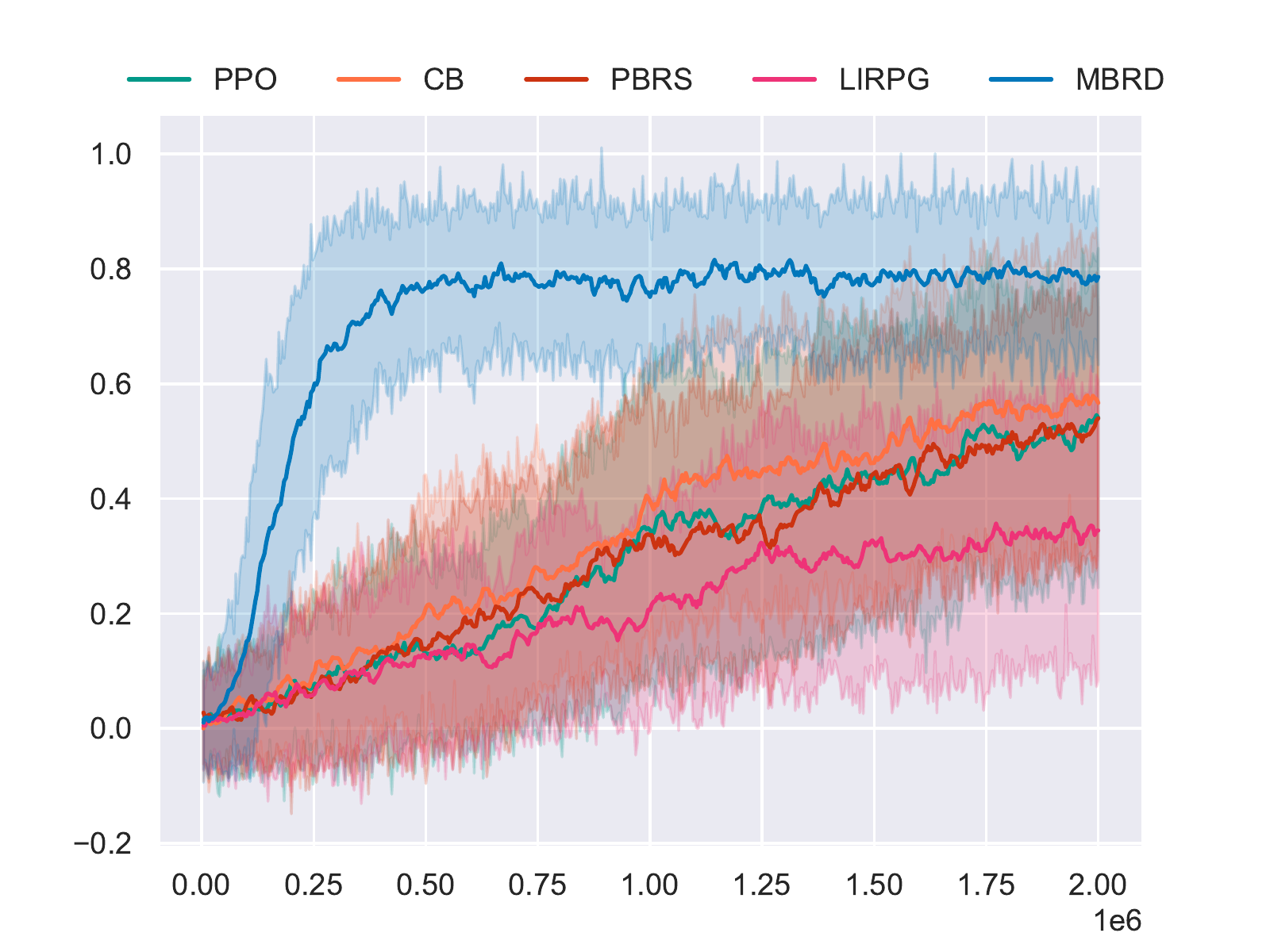}
    \label{foraging}
    \end{minipage}
    }
    \subfigure[Hungry-Thirsty]{
    \begin{minipage}[t]{0.32\linewidth}
    \centering
    \includegraphics[width=\columnwidth]{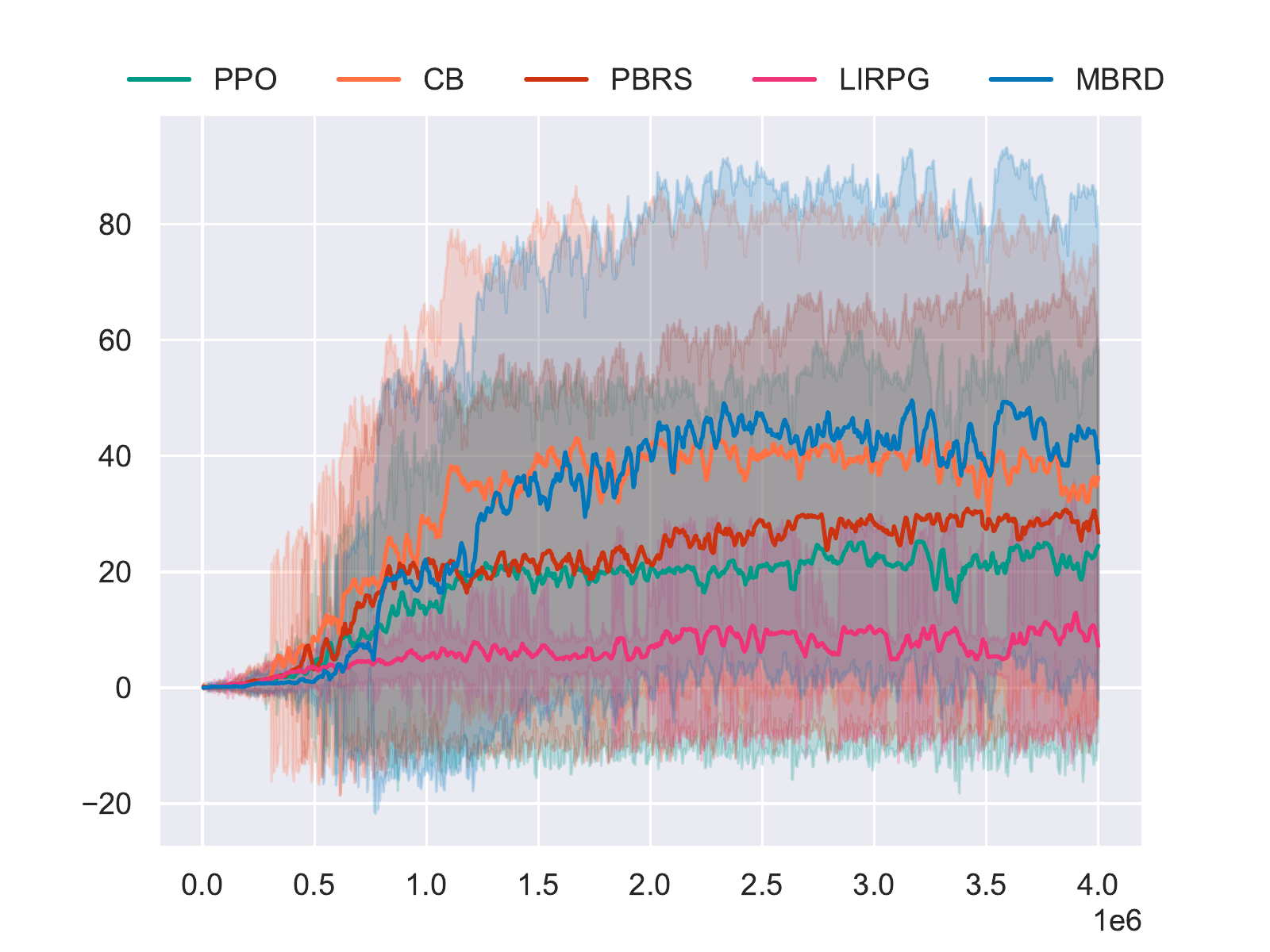}
    \label{hungry}
    \end{minipage}
    }
    \subfigure[Fight Monster]{
    \begin{minipage}[t]{0.32\linewidth}
    \centering
    \includegraphics[width=\columnwidth]{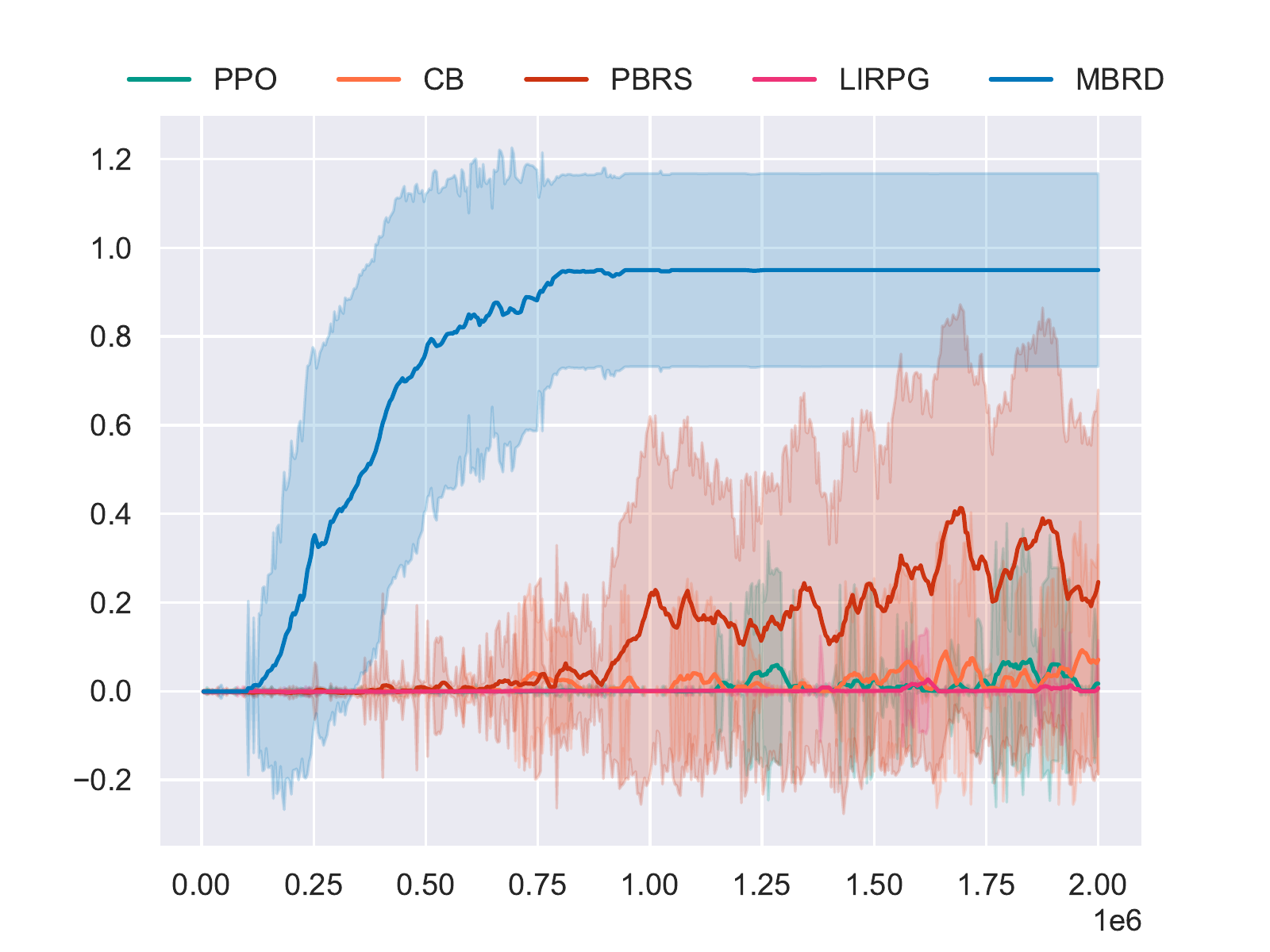}
    \label{reason}
    \end{minipage}
    }
    \subfigure[Hopper-v2]{
    \begin{minipage}[t]{0.32\linewidth}
    \centering
    \includegraphics[width=\columnwidth]{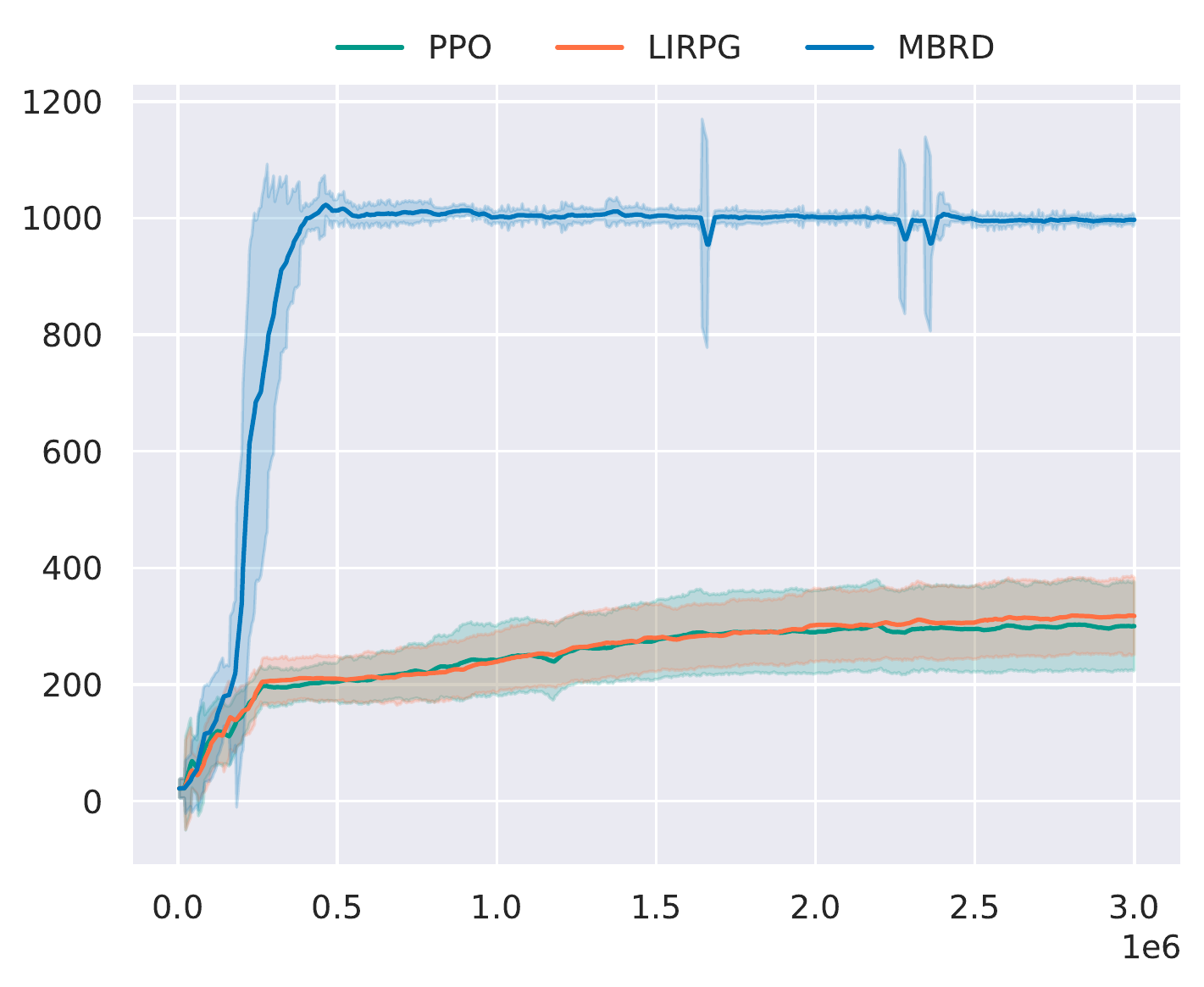}
    \label{hopper}
    \end{minipage}
    }
    \subfigure[Swimmer-v2]{
    \begin{minipage}[t]{0.32\linewidth}
    \centering
    \includegraphics[width=\columnwidth]{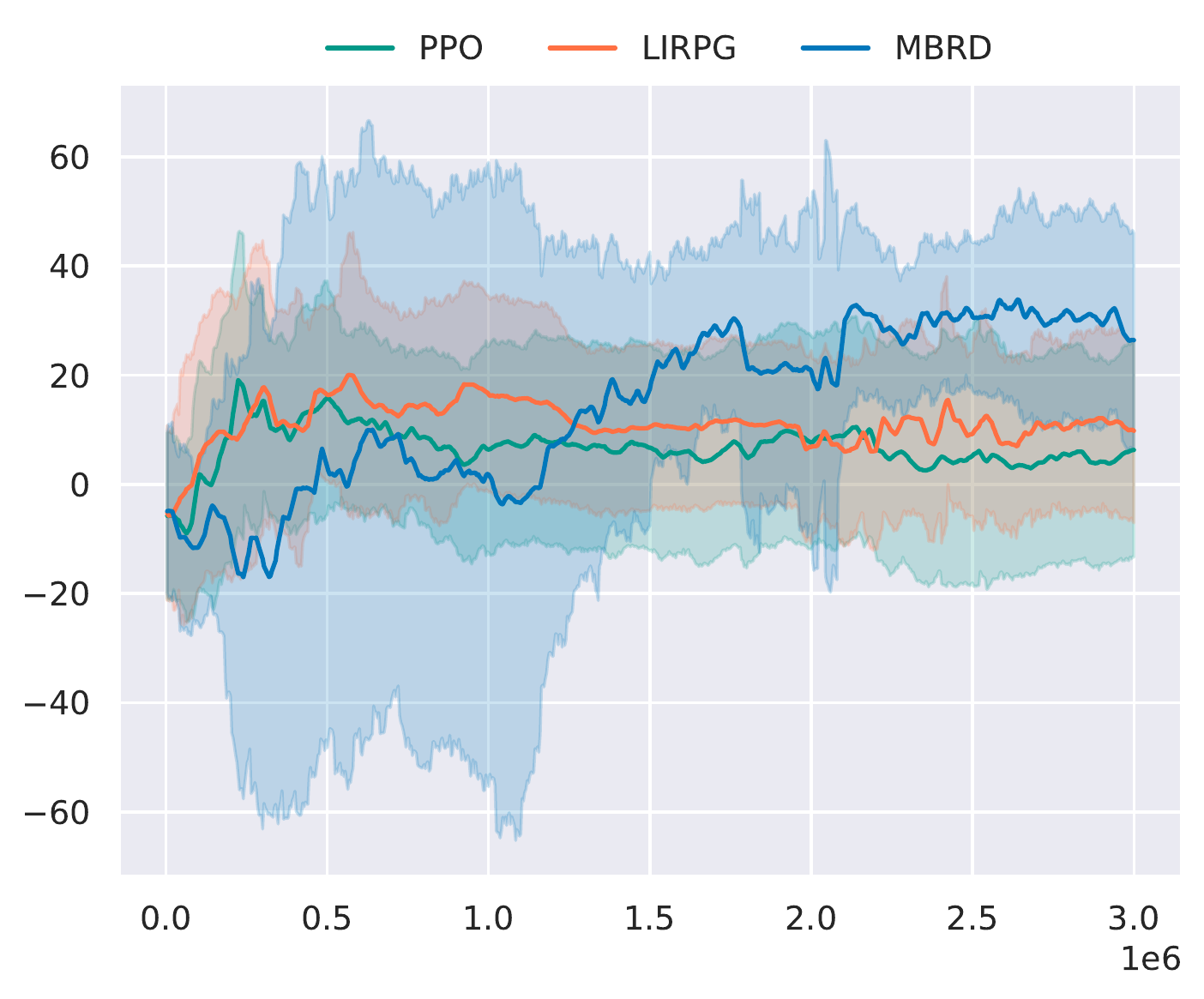}
    \label{swimmer}
    \end{minipage}
    }
    \caption{Experimental results of the grid-world and Mujoco domains (The x-axis is the number of training steps, the y-axis is the average cumulative extrinsic reward, and the shadow area is the standard deviation).}
    \label{performance}
\end{figure*}

\begin{figure*}[t]
    \centering
    \subfigure[Foraging]{
    \begin{minipage}[t]{0.32\linewidth}
    \centering
    \includegraphics[width=\linewidth]{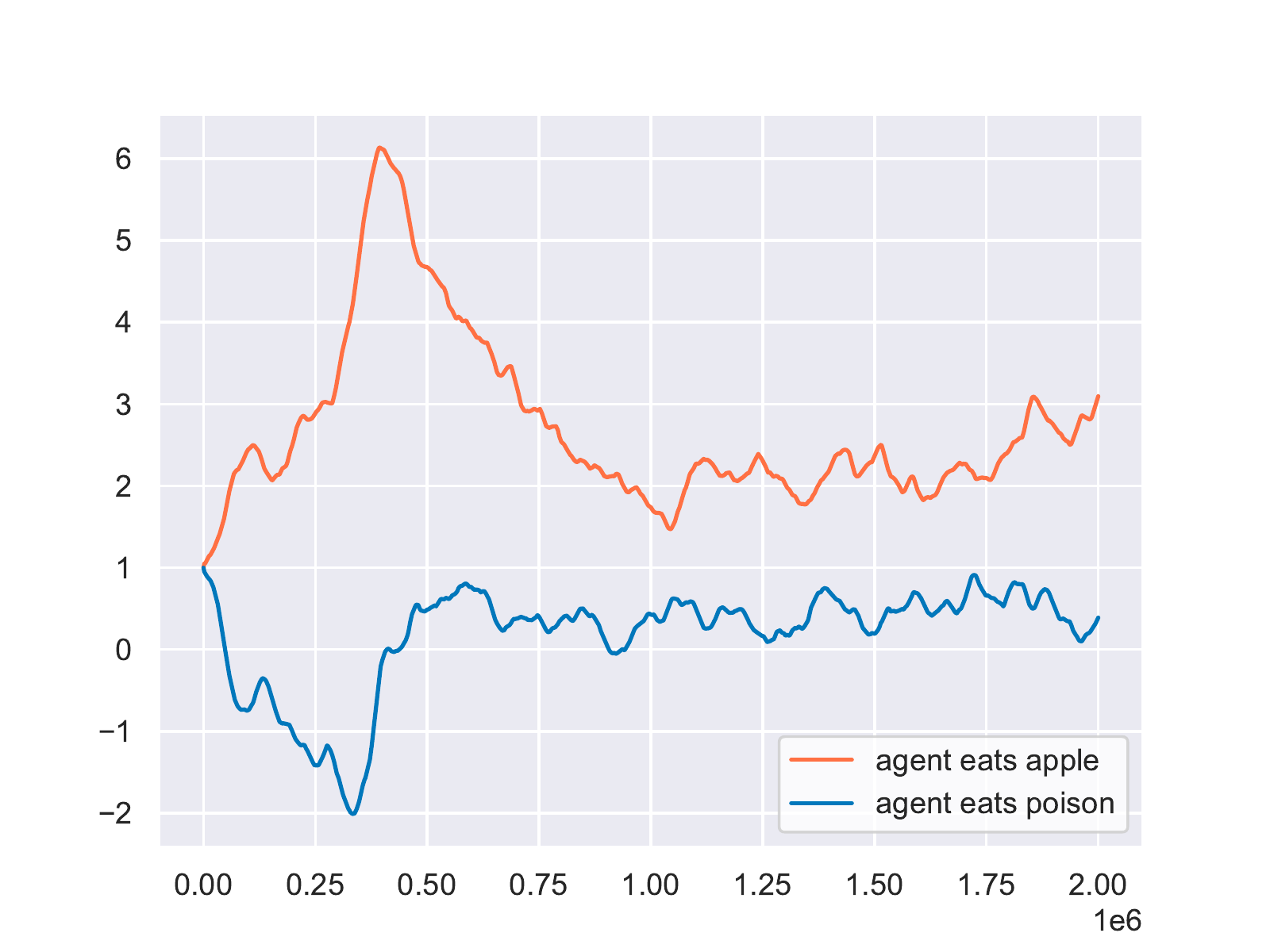}
    \label{foraging_w}
    \end{minipage}
    }
    \subfigure[Hungry-Thirsty]{
    \begin{minipage}[t]{0.32\linewidth}
    \centering
    \includegraphics[width=\linewidth]{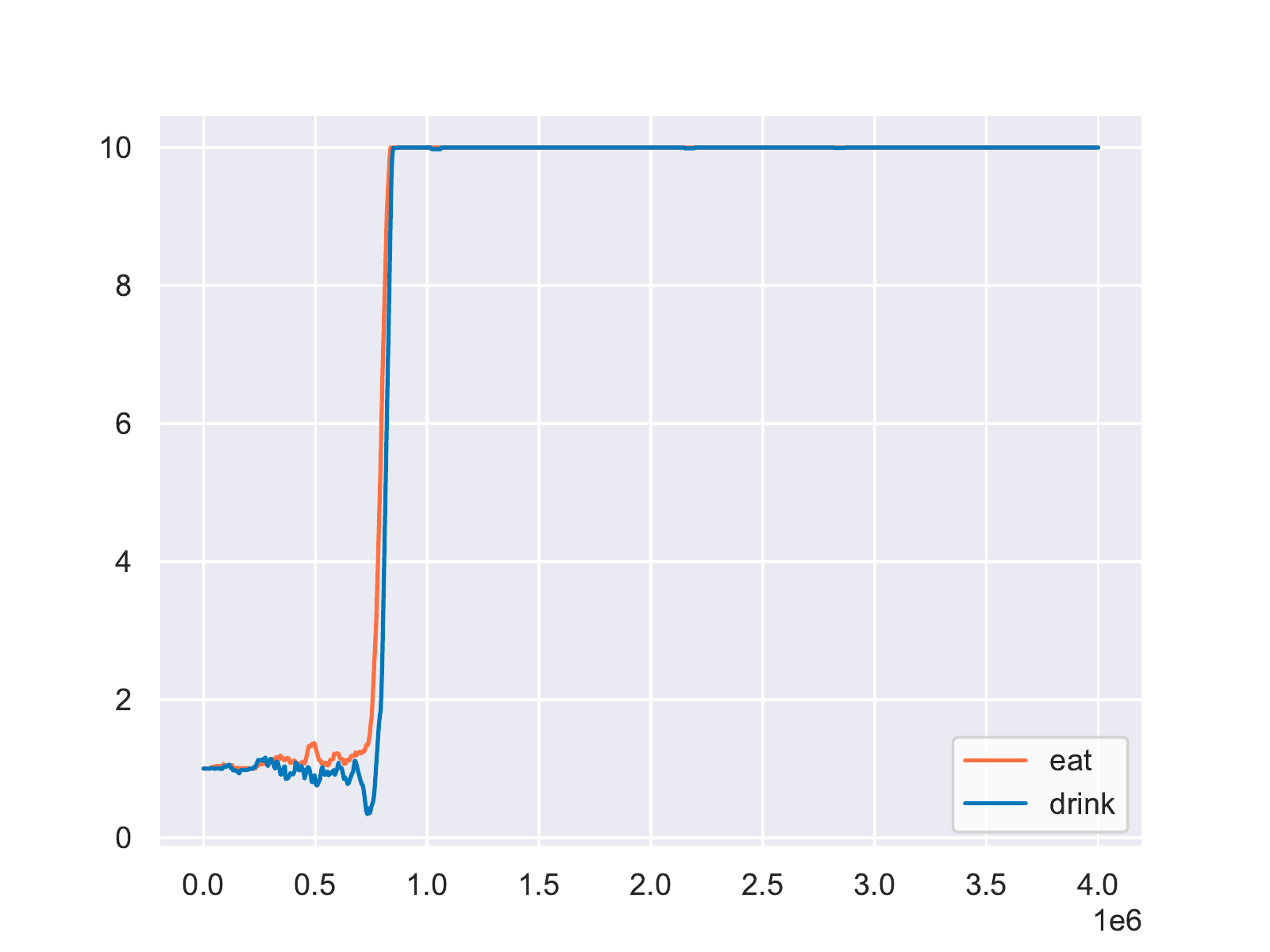}
    \label{hungry_w}
    \end{minipage}
    }
    \subfigure[Fight Monster]{
    \begin{minipage}[t]{0.32\linewidth}
    \centering
    \includegraphics[width=\linewidth]{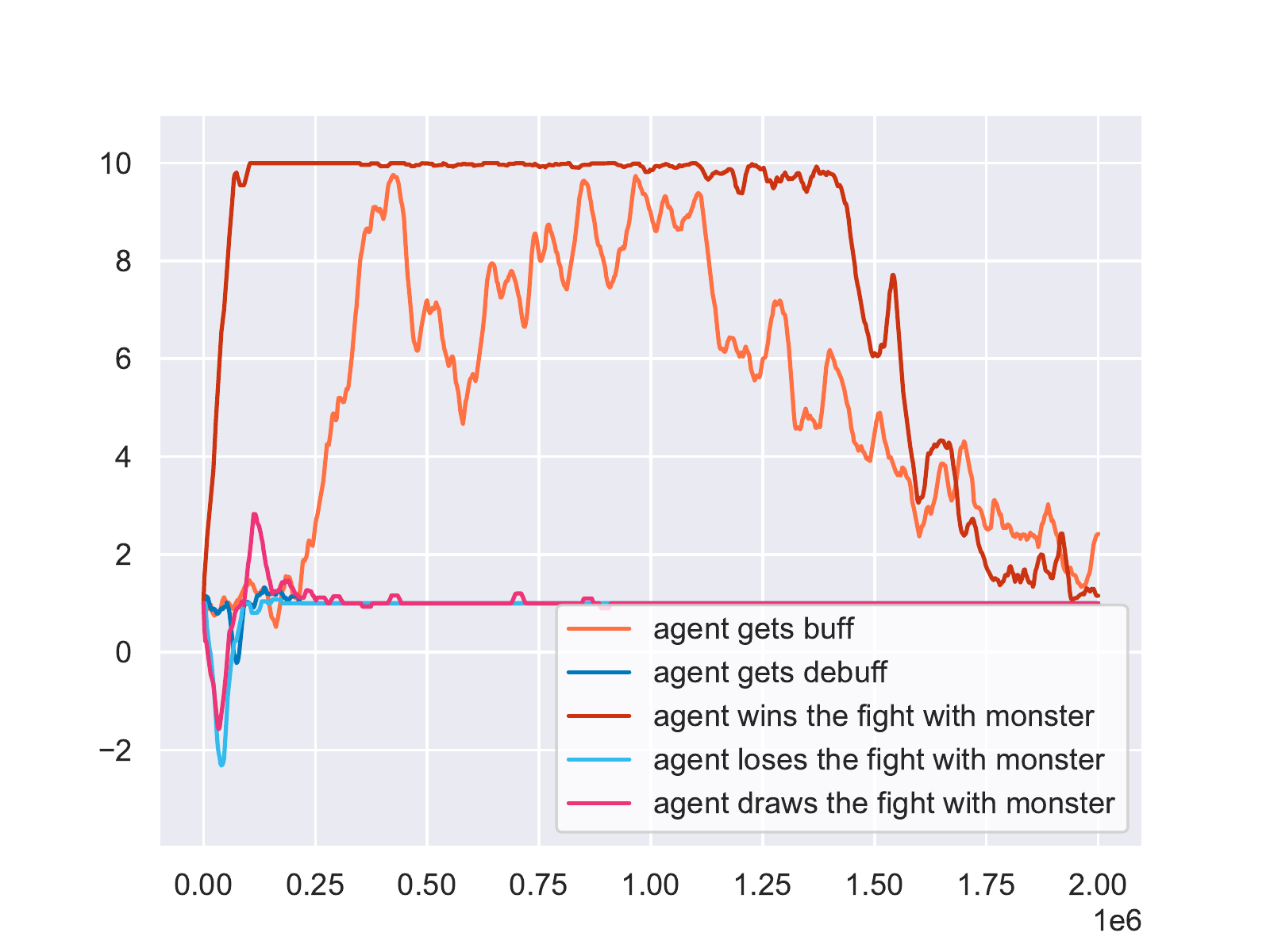}
    \label{reason_w}
    \end{minipage}
    }
    \subfigure[Hopper-v2]{
    \begin{minipage}[t]{0.45\linewidth}
    \centering
    \includegraphics[width=\linewidth]{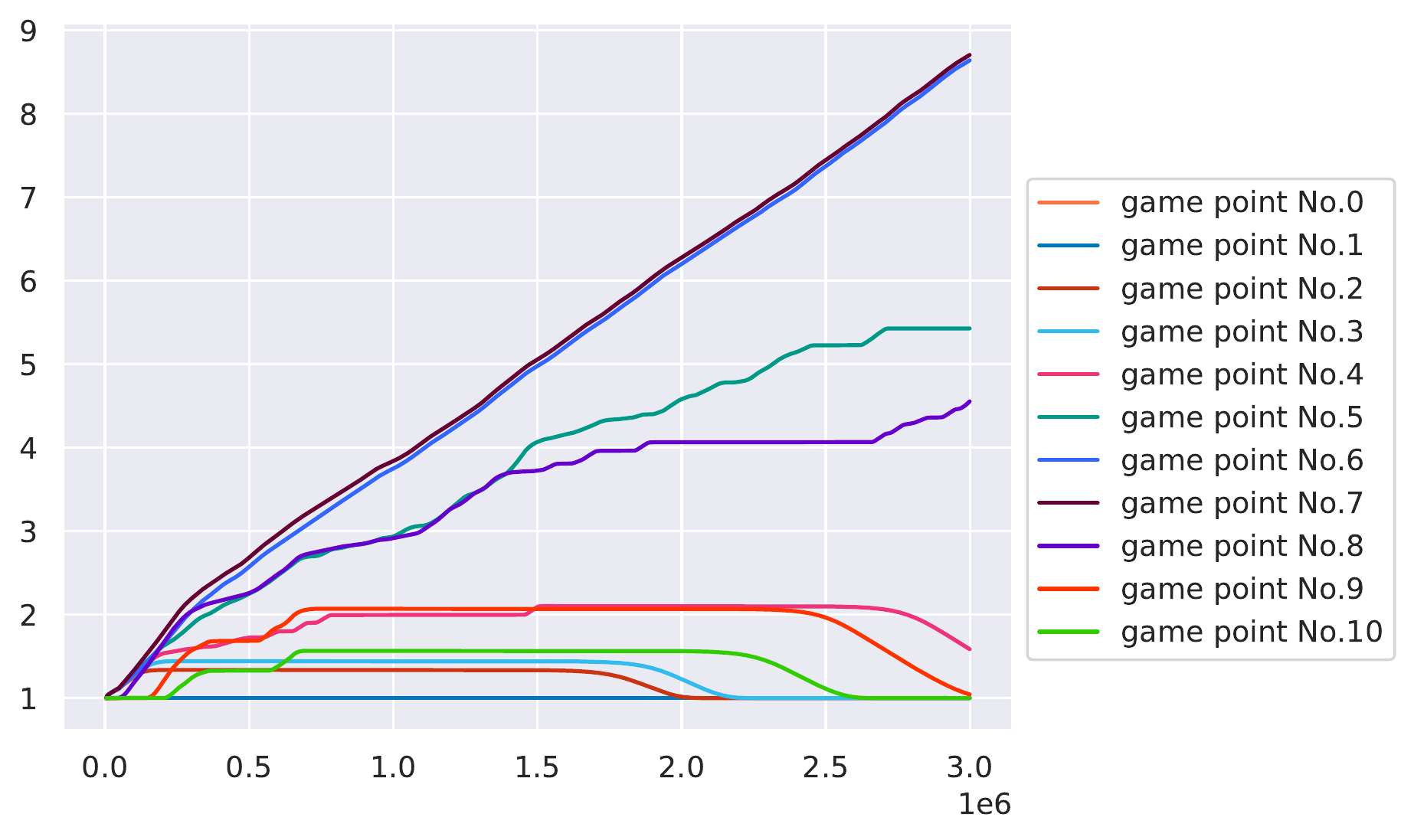}
    \label{hopper_w}
    \end{minipage}
    }
    \subfigure[Swimmer-v2]{
    \begin{minipage}[t]{0.45\linewidth}
    \centering
    \includegraphics[width=\linewidth]{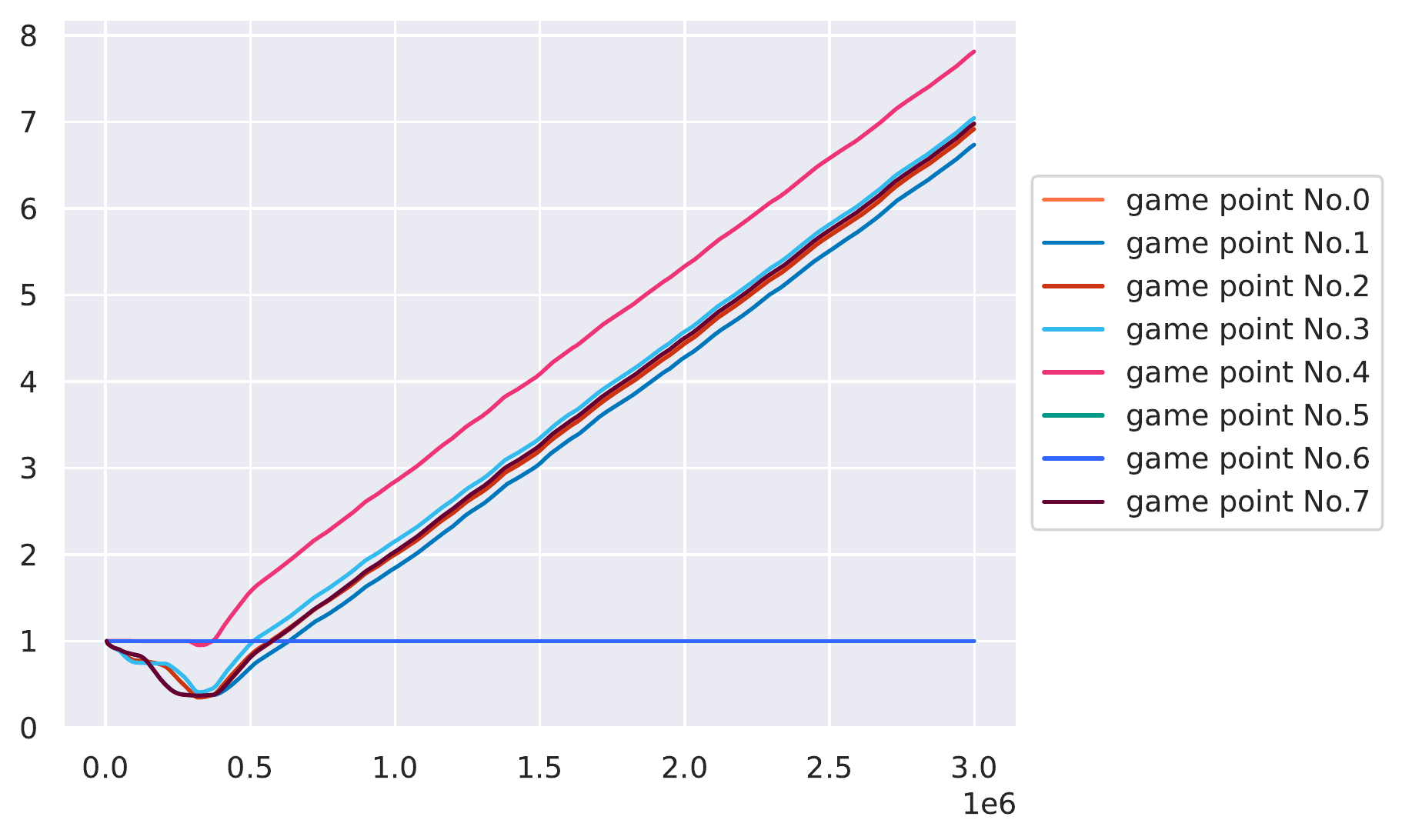}
    \label{swimmer_w}
    \end{minipage}
    }
    \centering
    \caption{The intrinsic rewards of each event generated by our MBRD method in the tested domains. (The x-axis is the number of training steps and the y-axis is the weights of each event for the intrinsic rewards during the training process).}
    \label{train_info_reward}
\end{figure*}

\begin{figure}[t]
\centering
\subfigure[Actor loss of Foraging]{
\begin{minipage}[t]{\linewidth}
\centering
\includegraphics[width=0.49\linewidth]{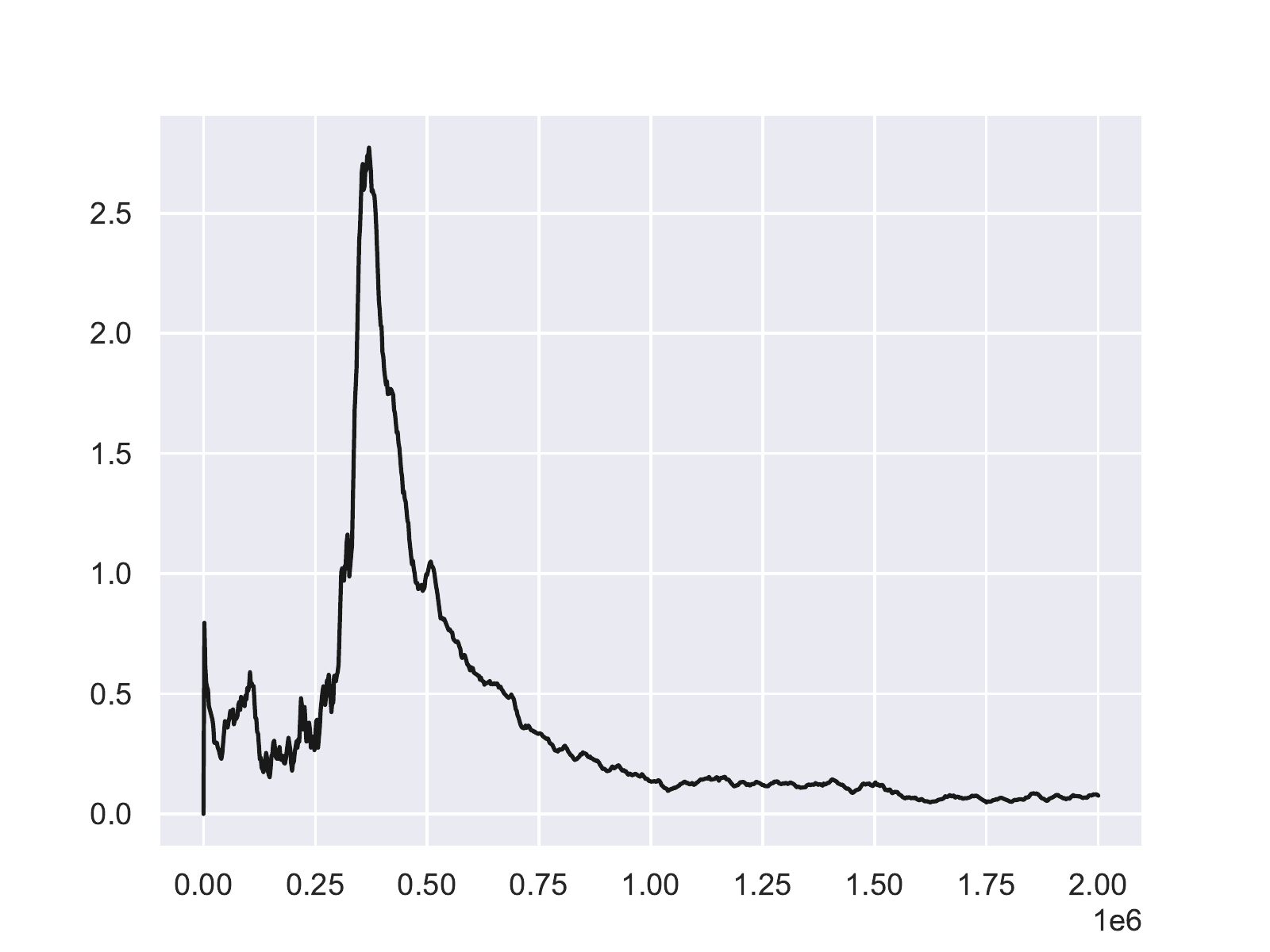}
\includegraphics[width=0.49\linewidth]{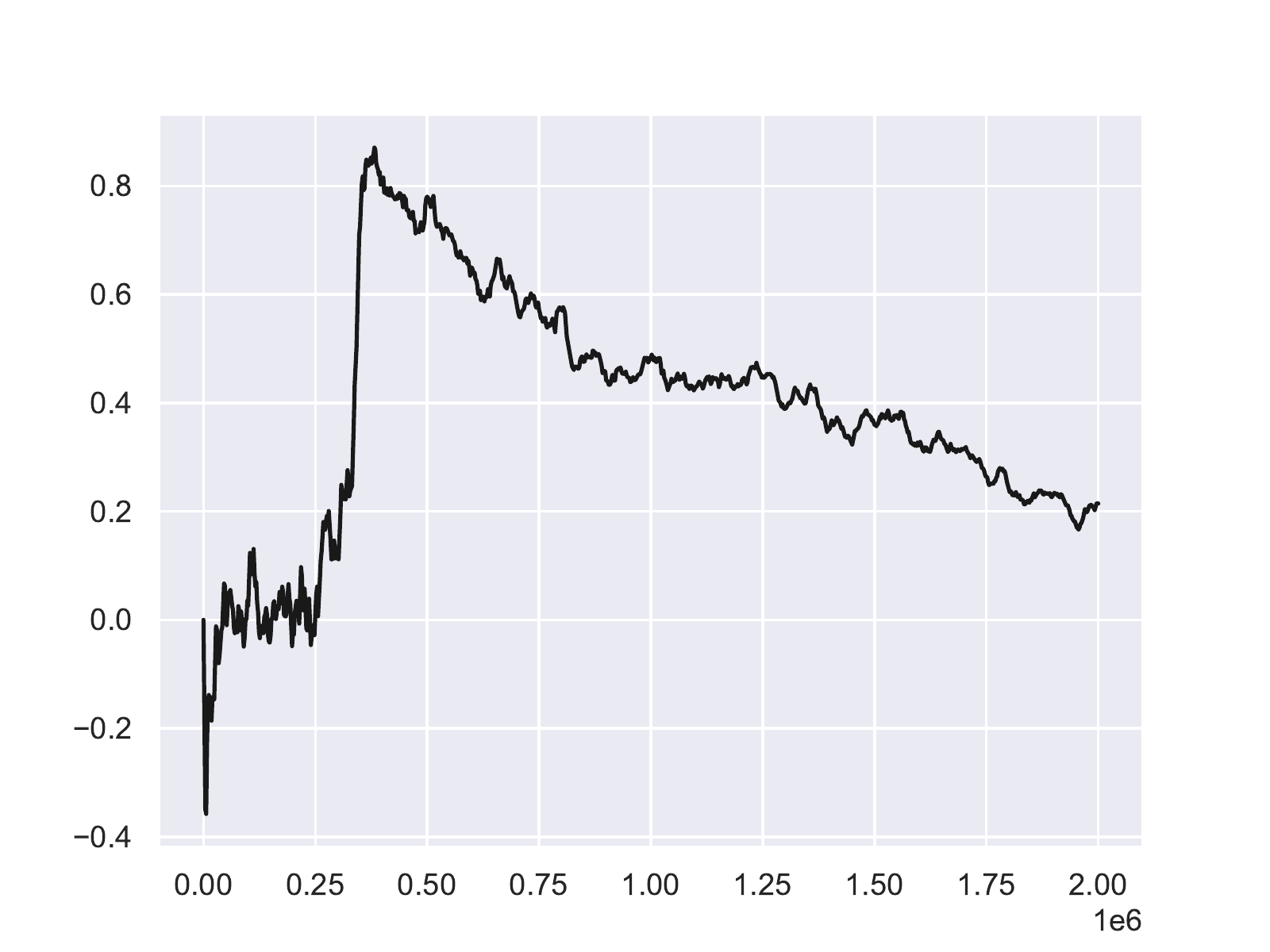}
\label{foraging_loss}
\end{minipage}
}
\subfigure[Actor loss of Hungry-Thirsty]{
\begin{minipage}[t]{\linewidth}
\centering
\includegraphics[width=0.49\linewidth]{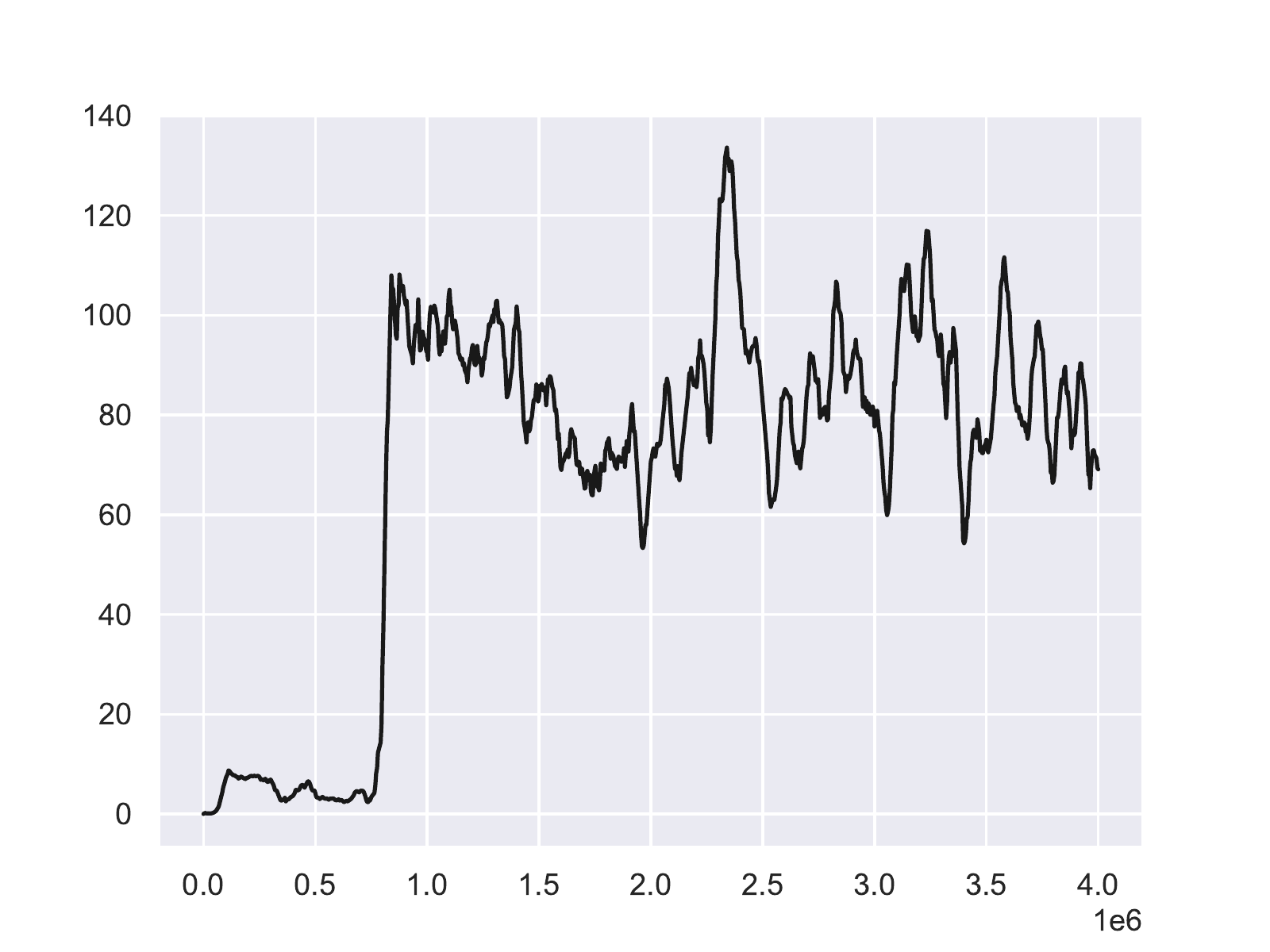}
\includegraphics[width=0.49\linewidth]{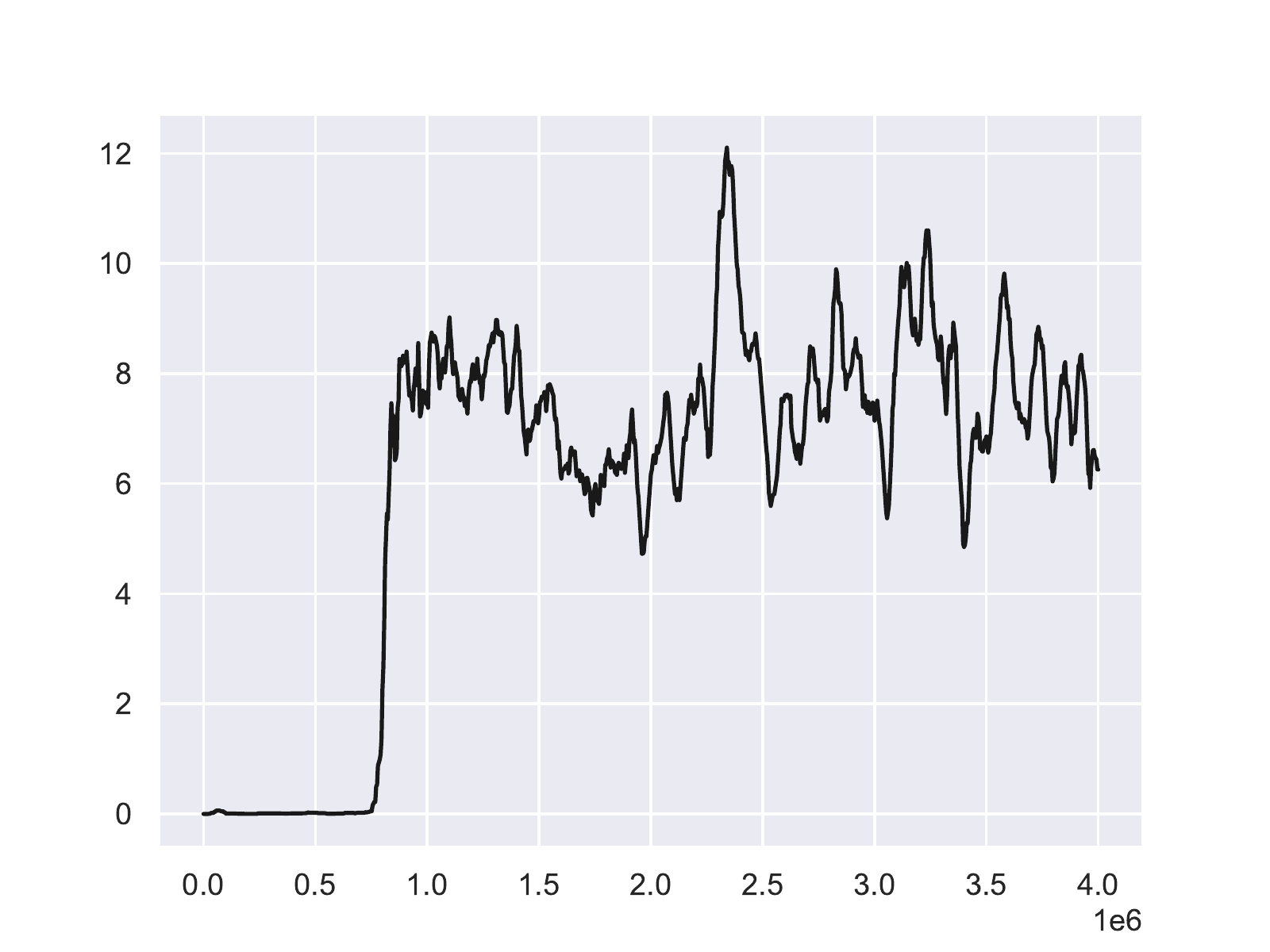}
\label{hungry_loss}
\end{minipage}
}
\subfigure[Actor loss of Fight Monster]{
\begin{minipage}[t]{\linewidth}
\centering
\includegraphics[width=0.49\linewidth]{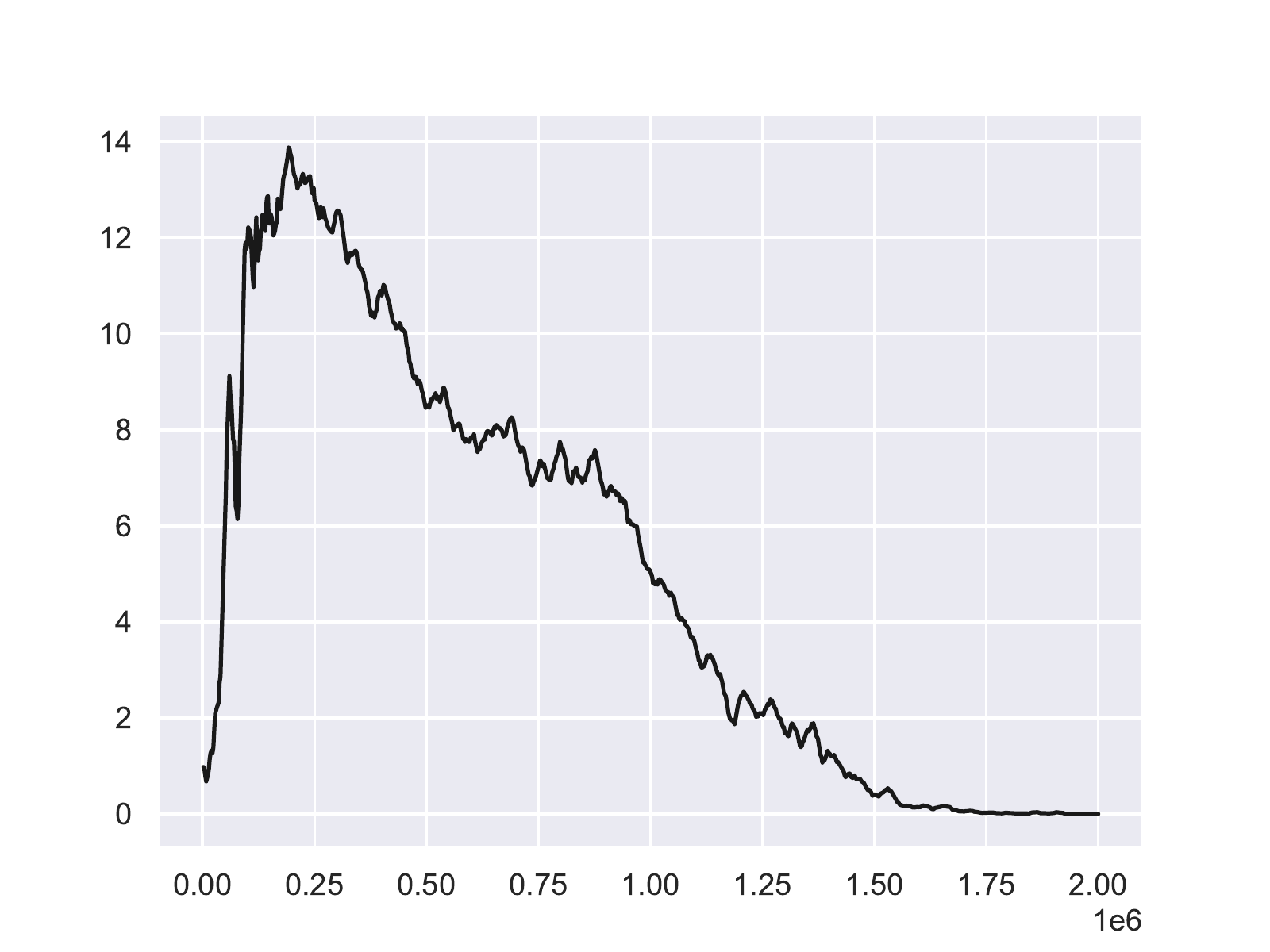}
\includegraphics[width=0.49\linewidth]{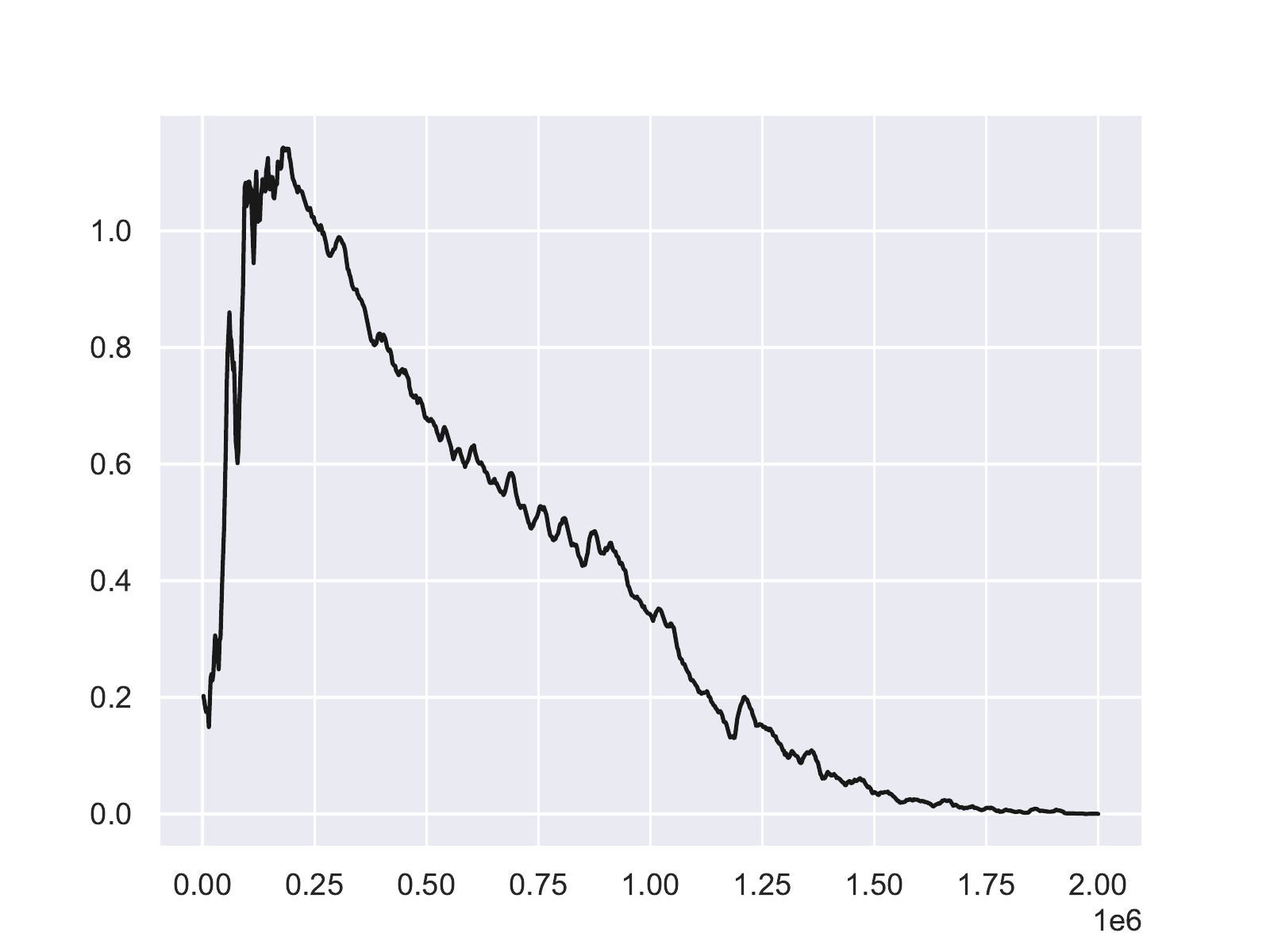}
\label{reason_loss}
\end{minipage}
}
\caption{Actor loss of virtual update process (The left figure is the loss of the inner optimization problem using the intrinsic rewards, while the right figure is the loss of the outer optimization problem using the extrinsic rewards. The x-axis is the number of training steps and the y-axis is the value).}
\label{train_info}
\end{figure}

\section{Experiments}

In this section, we conducted experiments to evaluate the advantages of designing intrinsic rewards for training RL agent instead of using extrinsic rewards directly. Specifically, we will show that an appropriate intrinsic reward function is beneficial for handling challenging RL problems of delayed reward, exploration, and credit assignment. In the experiments, we confirmed that our method performed better than the baselines on these problems in the tested domains.

We use PPO \citep{schulman2017proximal} as the underlying policy optimization algorithm for all baselines, where the main difference among these algorithms is in the way of reward generation. Notice that We do not make any assumptions for $J_{inner}$, so that any RL algorithm can be used in optimizing $J_{inner}$. In the experiments, all the algorithms are implemented in Python with OpenAI gym and Tensorflow running on an i9-10900k CPU.

\subsection{Grid-World Domains}


We test our method in the three grid-world domains as shown in Figure \ref{domain}. The size of the map is 5$\times$5 for all the grid-world domains. The basic settings are as follow:

\begin{itemize}
  \item \textbf{Foraging}: The agent move in a grid-world with apple and poison randomly located. When the agent reaches the position of [apple, poison], it will trigger an event in $\rho$: [agent eats apple, agent eats poison], with delayed extrinsic rewards $r\in$[1, -1] received 10 steps later.
  \item \textbf{Hungry-Thirsty} \cite{singh2009rewards}: The agent move in a grid-world with food and water randomly located. It can eat in a food location and drink in a water location with an event in $\rho$: [eat, drink]. After eating, it becomes non-hungry for one time step and receives extrinsic reward 1. After drinking it becomes non-thirsty for a period. When it is thirsty, the eating action fails.
  \item \textbf{Fight Monster}: The agent move in a grid-world with weapon, poison, and monster randomly located. When it fights with the monster, it will get 1 extrinsic reward with the weapon (buff), -1 extrinsic reward after touching the poison (debuff), and -0.1 extrinsic reward otherwise. The events are $\rho$: [agent gets buff, agent gets debuff, agent wins the fight with monster, agent loses the fight with monster, agent draws the fight with monster].
\end{itemize}

We compared our method with the following state-of-the-art baselines for automatic reward design:
\begin{itemize}
\item \textbf{PPO} (Proximal Policy Optimization)~\citep{schulman2017proximal}: RL without any auxiliary rewards methods.
\item \textbf{CB} (Count-Based Bonus): A method gives bonuses ($\sqrt{1/n}$) based on the number of occurrences of events.
\item \textbf{PBRS} (Potential-Based Reward Shaping)~\citep{ng1999policy}: A reward shaping method with policy invariance ensured.
\item \textbf{LIRPG} (Learning Intrinsic Rewards for Policy Gradient)~\citep{zheng2018learning}: A gradient-based method for learning intrinsic rewards.
\end{itemize}

%

In the grid-world experiments, the base learner (PPO) adopts a two-layer policy network with 8 units in each layer and a two-layer value function network with 32 units in each hidden layer. Each hidden layer is followed by a ReLU nonlinearity. The policy and value networks are updated in every 1024 steps and one such update contains 50 optimizing epochs with batch size 1024. The discount rate $\gamma$ is 0.999 in all tests. We train two million steps in both the Foraging and Fight Monster domains, and four million steps in the Hungry-Thirsty domain. The maximum episode length is 200 for all the three domains. The hyper-parameter $\beta$ is set differently in different domains. Specifically, we set $10^{-2}$ in the domain with high-frequency game points(i.e., Hungry-Thirsty) and $10^{-3}$ in the domain with low-frequency game points(i.e., Foraging and Fight Monster).

\subsubsection{Overall Performance.}

Figures \ref{foraging} , \ref{hungry} and \ref{reason} show that our method outperformed the baselines in all the three domains with the faster convergence speed and significantly higher cumulative extrinsic rewards. This is achieved by the advantages of our method in handling RL problems with delayed reward, exploration, and credit assignment. We will analyze each of them in the following sections.


\begin{figure}[t]
    \centering
    \includegraphics[width=1.0\linewidth]{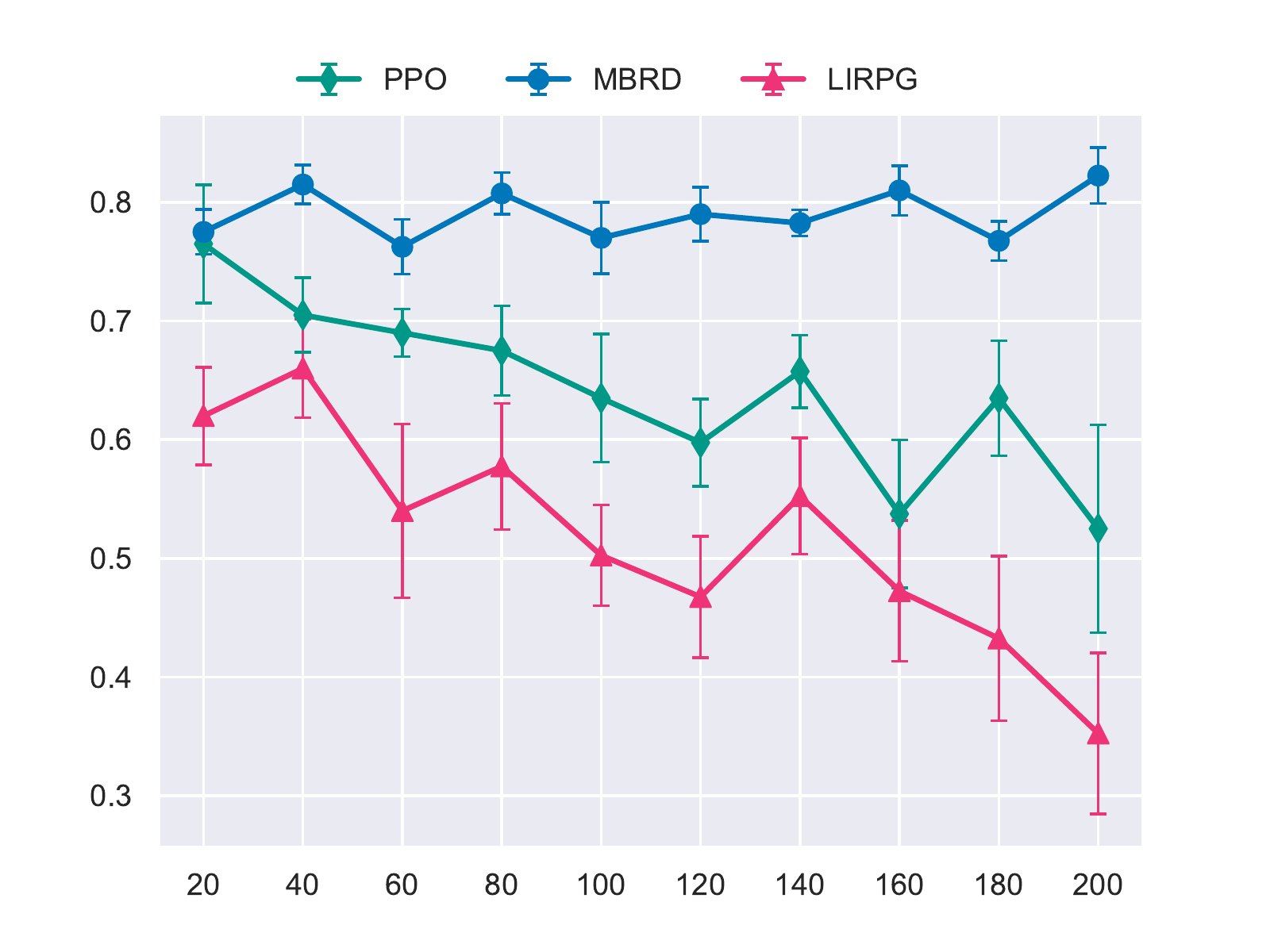}
	\caption{Cumulative extrinsic rewards (y-axis) with different maximum episode lengths (x-axis) in Foraging. }
    \label{step}
\end{figure}

\subsubsection{Delayed Rewards.}

In the Foraging domain, we can see from Figure \ref{foraging} that our method has substantial advantage over the baselines. Notice that the extrinsic rewards of this domain are delayed by 10 time steps. Existing methods such as LIRPG use the overall extrinsic cumulative rewards to optimize the intrinsic reward function. Hence they are more sensitive to the delays especially with longer episode lengths. This is confirmed by our results shown in Figure \ref{step}. The learned intrinsic rewards for the Foraging domain is shown in Figure \ref{foraging_w}. We can see that the agent learns correctly that eating apple is positive and eating poison is negative.

\begin{figure}[t]
    \centering
    \includegraphics[width=1.0\linewidth]{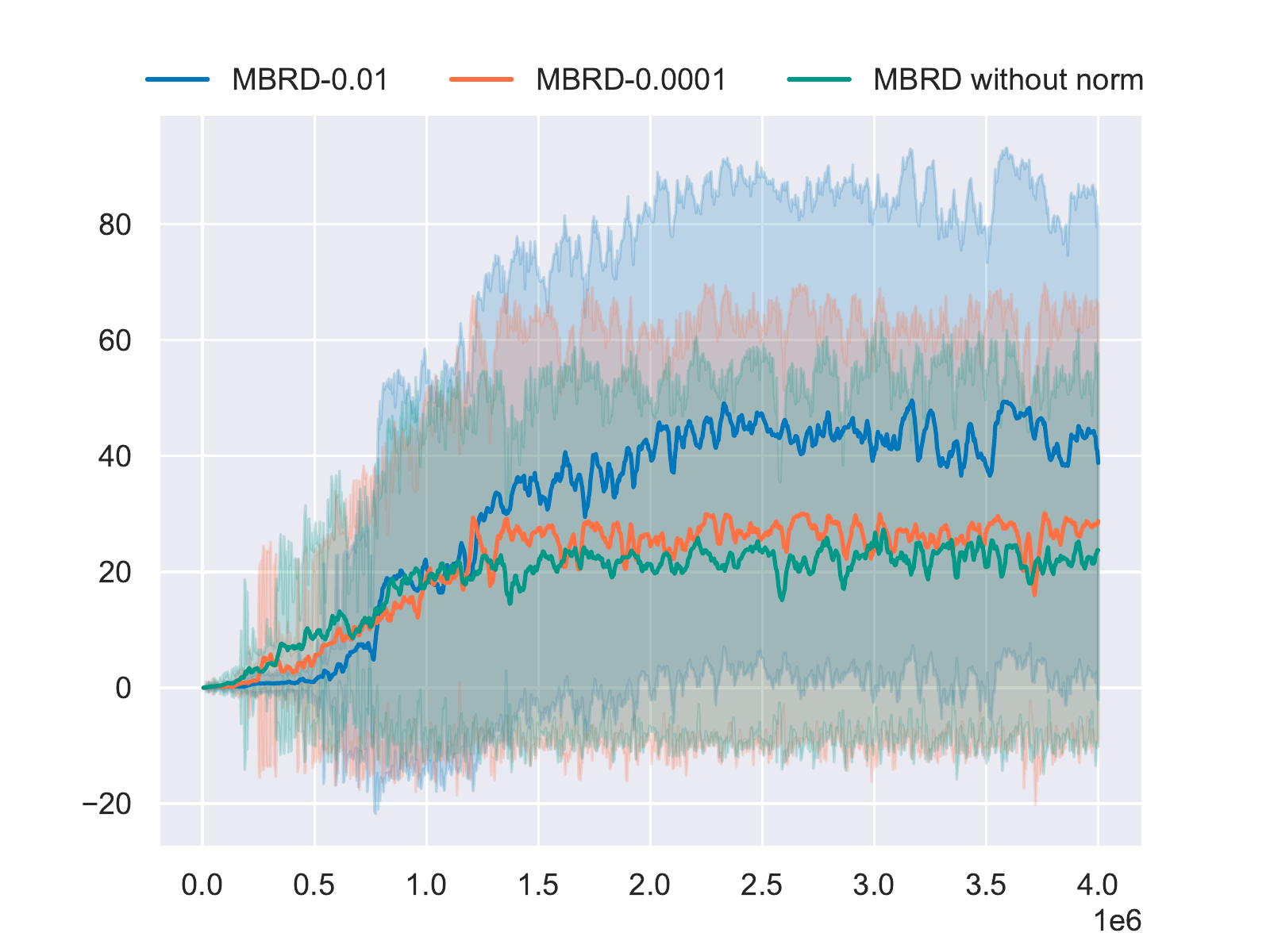}
	\caption{Ablation experiments in Hungry-Thirsty with different hyper-parameter of $\beta$ (i.e., $\beta=$ 0.01, 0.0001, and 0).}
    \label{hungry_ab}
\end{figure}

\subsubsection{Exploration.}

In the Hungry-Thirsty domain, the agent only receives extrinsic rewards with the ``eat'' action, which will fail if the agent is thirsty. The agent will become thirsty if it did not ``drink'' for a period of time. However, the ``drink'' action has no extrinsic rewards. Most of existing reward design methods depend on the actions with extrinsic rewards (e.g., ``eat'') and have difficulty on handling the actions without extrinsic rewards (e.g., ``drink''). In contrast, we use the term $\beta\nabla_{\phi}||\pmb{w}_{\phi} - \pmb{w}_{init}||^2$ in our optimization objective (i.e., $\beta \nabla_{\phi} ||z_{in}||$) to encourage exploration on the actions without extrinsic rewards. The benefit of doing this is shown in Figure \ref{hungry}. We also conduct ablation experiments with different $\beta$ to show its effectiveness in Figure \ref{hungry_ab}. The learned intrinsic rewards for the Hungry-Thirsty domain is shown in Figure \ref{hungry_w}. We can see that the agent correctly knows that drinking is equally important with eating.

\subsubsection{Credit Assignment.}

In the Fight Monster domain, the extrinsic rewards of fighting with the monster depend on whether the agent has taken the weapon (buff), touched the poison (debuff), or done nothing. In other words, the credit comes from the previous actions instead of the fighting action. Most of existing reward design methods fail to assigning the credit to the correct action. As we can see from Figure \ref{reason}, our method performs significantly better than the baselines. This is due to the fact that we compute the intrinsic rewards based on the motivations instead of directly the extrinsic rewards. The learned intrinsic rewards for the Fight Monster domain is shown in Figure \ref{reason_w}. We can see that the agent correctly learn that getting the buff is beneficial for fighting with the monster.

In Figures \ref{foraging_loss}, \ref{hungry_loss}, and \ref{reason_loss}, we visualize the losses of the inner and outer optimization problems as the training step increases. Notice that these losses correspond to the motivations $z_{in}$ and $z_{ex}$. As we can see from the figures, the shapes of the two losses tend to match each other. This indicates that the distance between the motivations $z_{in}$ and $z_{ex}$ becomes close to each other during the training process. This is what we expect in the two-tire optimization problems.

\subsection{Mujoco Domains}

In order to evaluate our MBRD method in more complex domains, we conducted experiments in two Mujoco domains. We slightly modified their settings to make them more challenging for automatic reward design as follow:
\begin{itemize}
  \item \textbf{Hopper-v2}: In this domain, we only modified the way that the extrinsic rewards are given. Specifically, the agent only receives the cumulative reward of the original problem at the end of a trajectory and 0 for the intermediate time steps. Hence the agent no longer gets the environment's evaluation of the quality of each step of its action. In other words, the rewards of this problem become sparse and delayed so the reward design is important for the success of this domain. We splits the overall range of the original rewards into 11 successive regions called game points $\rho$: [0 .. 10] as in Table \ref{env_hopper}. An event will be triggered if the original reward of the agent's action is in one of the regions. Although the agent only receives the cumulative reward at the end of a trajectory, it still can observe the event at each time step. Therefore, we can compute the intrinsic rewards based on such information.
  \item \textbf{Swimmer-v2}: We modified the rewards of this domain similar to the Hopper-v2 domain. Here, we split the original reward of the environment into two parts: the forward reward and the control reward. The game points are specified based on the range of the forward and control reward in the original problem. In more details, we set $\rho$: [0, 1, 2,  3, 4] for the forward reward and [5, 6, 7] for the control reward as in Table \ref{env_swimmer}, depending on the domain.
\end{itemize}

In the Mujoco domains, the base learner (PPO) adopts a two-layer policy network with 64 units in each layer and a two-layer value function network with 64 units in each hidden layer. The policy and value networks are updated in every 20000 steps and one such update contains 5 optimizing epochs with batch size 1024. We train three million steps and maximum episode length is 1000 in both the Hopper-v2 and Swimmer-v2 domains. The discount rate $\gamma$ is 0.99 in all tests. The hyper-parameter $\beta$ is $10^{-3}$.

\begin{table*}[t]
\center
\caption{Game Points of Hopper-v2}
\label{env_hopper}
\begin{tabular}{ccccccc}
\toprule
Game Point No. & 0 & 1 & 2 & 3 & 4 & 5 \\
\midrule
Reward Range & (-$\infty$, -3) & [-3, -2) & [-2, -1) & [-1, -0.5) & [-0.5, 0) & [0, 0.5)  \\
\midrule
Game Point No.  & 6 & 7 & 8 & 9 & 10 \\
\midrule
Reward Range & [0.5, 1) & [1, 1.5) & [1.5, 2) & [2, 3) & [3, $\infty$) \\
\bottomrule
\end{tabular}
\end{table*}

\begin{table*}[t]
\center
\caption{Game Points of Swimmer-v2}
\label{env_swimmer}
\begin{tabular}{cccccc}
\toprule
Game Point No. & 0 & 1 & 2 & 3 & 4 \\
\midrule
Forward Reward Range & $(-\infty, 0)$ & $[0, 0.5)$ & $[0.5, 1)$ & $[1, 2)$ & $[2, \infty)$  \\
\midrule
Game Point No. & 5 & 6 & 7 \\
\midrule
Control Reward Range & $[-\infty, -2)$ & $[-2, -1)$ & $[-1, 0)$ \\
\bottomrule
\end{tabular}
\end{table*}

The overall performance of our method is shown in Figures \ref{hopper} and \ref{swimmer}. As we can see from the figures, our method outperformed all the baselines in both domains.

In Figure \ref{hopper_w}, we show the intrinsic rewards for the Hopper-v2 domain learned by our method. Here, the agent learns that all game points except No. 0 and 1 are beneficial to improve extrinsic rewards in the early stage. Among them, game points No. 6 and 7 are most useful so the intrinsic rewards corresponding to them increase most quickly. In the final stage, the agent leans that game points No. 5-8 that are beneficial. Note that the original rewards corresponding to these game points are from 0.5 to 3. This is consistent with the original reward model.

In Figure \ref{swimmer_w}, we show the intrinsic rewards for the Swimmer-v2 domain learned by our method. In this domain, the agent quickly learns that game point No. 4 corresponding to the forward reward $[2, \infty)$ and game point No. 7 corresponding to the control reward $[-1, 0)$. They are indeed the key to improving the agent's performance, which also fits the original reward model.

\section{Related Work}

In this section, we briefly review the previous work in the literature that are most related to our method.

\subsection{Reward Shaping}

Potential-based reward shaping (PBRS) \cite{ng1999policy} ensured that the optimal policy of the original problem is not violated by the shaping reward function. Other attempts have been made to extend the potential-based shaping method to state-action shaping functions \cite{wiewiora2003principled} and dynamic shaping \cite{devlin2012dynamic}. \citet{harutyunyan2015expressing} proposed the method of transforming arbitrary reward functions to potential-based advice, which is used in inverse RL \cite{suay2016learning}. \citet{zou2019reward} proposed a meta-learning algorithm based on the idea of PBRS providing an initial approximation to the value function.

\subsection{Intrinsically Motivated RL}

\citet{chentanez2005intrinsically} introduced intrinsically motivated RL using predict error as reward signals.
Recently, intrinsic rewards are often used to encourage exploration, such as: ``curiosity'' \cite{mohamed2015variational,houthooft2016vime} that uses prediction error as reward signal, and ``visitation counts''  \cite{bellemare2016unifying,fu2017ex2} that discourages the agent from revisiting the same states. These intrinsic rewards bridge the gaps among sparse extrinsic rewards, by guiding the agent to efficiently explore the environment to find the next extrinsic reward \cite{burda2018large}.

\subsection{Optimal Reward Design}

\citet{singh2010intrinsically} proposed the optimal reward framework which has shown that intuition alone is not always adequate to devise a good reward signal. \citet{sorg2010reward} proposed the policy gradient for reward design  algorithm --- a scalable algorithm that only works with lookahead-search based planning agents. \citet{zheng2018learning} introduced a gradient-based method for learning intrinsic rewards for model-free RL agents. \citet{jaderberg2019human} proposed a method to build intrinsic rewards based on game points and use evolution strategies to find the optimal intrinsic reward parameters. \citet{venkattaramanujam2019self} generates internal signals by learning a distance metric to reward agents for getting closer to their target image-based goals. \citet{colas2020language} learn language-conditioned reward functions from language descriptions of autonomous exploratory trajectories respectively.


Unlike previous work that treats intrinsic rewards as a part of the policy, we extract the underlying goal of maximizing certain rewards, and shape intrinsic rewards by minimizing the distance between extrinsic and intrinsic motivations.

%


\section{Conclusions}

We proposed a novel method named MBRD for shaping goal-consistent intrinsic rewards. We introduced the concept of motivation which captures the underlying goal of maximizing certain rewards, and shapes the intrinsic rewards by minimizing the distance between intrinsic and extrinsic motivations. The experimental results show that the MBRD method has advantages in handling challenging RL problems such as delayed reward, exploration, and credit assignment. In addition to being directly used to shape a new intrinsic reward function, it can also be used to optimize existing reward functions, such as using human comments as extrinsic rewards to fine-tune existing reward signals. The flexible framework of MBRD encourages more potential future works, designing new delicate methods to extract the goal of rewards may contribute to wider applications, and automatic detection for game points is also a potential research direction.

\bibliography{citations}

\end{document}